\newcommand\Tstrut{\rule{0pt}{2.2ex}}         % = `top' strut
\theoremstyle{plain}
\newtheorem{theorem}{Theorem}[section]
\theoremstyle{definition}
\newtheorem{assumption}[theorem]{Assumption}
\theoremstyle{remark}
\newtheorem{remark}[theorem]{Remark}
\icmltitlerunning{Fisher SAM: Information Geometry and Sharpness Aware Minimisation}
\begin{document}

\twocolumn[
\icmltitle{Fisher SAM: Information Geometry and Sharpness Aware Minimisation}

% It is OKAY to include author information, even for blind
% submissions: the style file will automatically remove it for you
% unless you've provided the [accepted] option to the icml2022
% package.

% List of affiliations: The first argument should be a (short)
% identifier you will use later to specify author affiliations
% Academic affiliations should list Department, University, City, Region, Country
% Industry affiliations should list Company, City, Region, Country

% You can specify symbols, otherwise they are numbered in order.
% Ideally, you should not use this facility. Affiliations will be numbered
% in order of appearance and this is the preferred way.
\icmlsetsymbol{equal}{*}

\begin{icmlauthorlist}
\icmlauthor{Minyoung Kim}{saicc}
\icmlauthor{Da Li}{saicc}
\icmlauthor{Shell Xu Hu}{saicc}
\icmlauthor{Timothy M.~Hospedales}{saicc,ed}
% \icmlauthor{Firstname1 Lastname1}{equal,yyy}
% \icmlauthor{Firstname2 Lastname2}{equal,yyy,comp}
% \icmlauthor{Firstname3 Lastname3}{comp}
% \icmlauthor{Firstname4 Lastname4}{sch}
% \icmlauthor{Firstname5 Lastname5}{yyy}
% \icmlauthor{Firstname6 Lastname6}{sch,yyy,comp}
% \icmlauthor{Firstname7 Lastname7}{comp}
% %\icmlauthor{}{sch}
% \icmlauthor{Firstname8 Lastname8}{sch}
% \icmlauthor{Firstname8 Lastname8}{yyy,comp}
%\icmlauthor{}{sch}
%\icmlauthor{}{sch}
\end{icmlauthorlist}

\icmlaffiliation{saicc}{Samsung AI Center, Cambridge, UK}
\icmlaffiliation{ed}{University of Edinburgh}
% \icmlaffiliation{yyy}{Department of XXX, University of YYY, Location, Country}
% \icmlaffiliation{comp}{Company Name, Location, Country}
% \icmlaffiliation{sch}{School of ZZZ, Institute of WWW, Location, Country}

\icmlcorrespondingauthor{Minyoung Kim}{mikim21@gmail.com}
%\icmlcorrespondingauthor{Firstname2 Lastname2}{first2.last2@www.uk}

% You may provide any keywords that you
% find helpful for describing your paper; these are used to populate
% the "keywords" metadata in the PDF but will not be shown in the document
\icmlkeywords{Machine Learning, ICML}

\vskip 0.3in
]

% this must go after the closing bracket ] following \twocolumn[ ...

% This command actually creates the footnote in the first column
% listing the affiliations and the copyright notice.
% The command takes one argument, which is text to display at the start of the footnote.
% The \icmlEqualContribution command is standard text for equal contribution.
% Remove it (just {}) if you do not need this facility.

\printAffiliationsAndNotice{}  % leave blank if no need to mention equal contribution
%\printAffiliationsAndNotice{\icmlEqualContribution} % otherwise use the standard text.

\begin{abstract}
Recent sharpness-aware minimisation (SAM) is known to find flat minima which is beneficial for better generalisation with improved robustness. SAM essentially modifies the loss function by reporting the maximum loss value within the small neighborhood around the current iterate.  However, it uses the Euclidean ball to define the neighborhood, which can be inaccurate since loss functions for neural networks are typically defined over probability distributions (e.g., class predictive probabilities), rendering the parameter space non Euclidean. In this paper we %address this issue by considering 
consider the information geometry of the model parameter space when defining the neighborhood, namely replacing SAM's Euclidean balls with ellipsoids induced by the Fisher information. Our approach, dubbed Fisher SAM, %(FSAM for short),  
defines more accurate neighborhood structures %and robust loss function since the neighborhood 
that conform to the intrinsic metric of the underlying statistical manifold. For instance, SAM may probe the worst-case loss value at either a too nearby or inappropriately distant point due to the ignorance of the parameter space geometry, which is avoided by our Fisher SAM. %probes the worst-case point within the equal distance ball derived from the Fisher information at the point. 
Another recent Adaptive SAM approach  stretches/shrinks the Euclidean ball in accordance with the scale of the parameter magnitudes. This might be dangerous, potentially destroying the neighborhood structure. We demonstrate improved performance of the proposed Fisher SAM on several benchmark datasets/tasks.
\end{abstract}

%%%%%%%%%%%%%%%%%%%%%%%%%%%%%%%%%%%%%%%%%%%%%%%%%%%%%%%%%%%%%%%%%%%%%%%%%%%%%%%%%%%%%%%%%%%%%%%%%%%%%%%%%%%%%%%%%%%%%%%%%%%%%%%%%%%%%%%%%%%%%%%%%%%%%%%%%%%%%%%%%%%%%%%%%%%%%%%%%%%%%%%%%%%%%%%%%%%%%%%%
\section{Introduction}\label{sec:intro}
Contemporary deep learning models achieve state of the art generalisation performance on a wide variety of tasks. These models are often massively  overparameterised, and capable of memorizing the entire training set \cite{zhang2017understandingDeepGen}. The training loss landscape of such models is complex and non-convex with multiple local and global minima of highly varying generalisation performance. Good performance is therefore obtained by exploiting various explicit and implicit regularisation schemes during learning to find local minima in the training loss that actually generalise well. Methods such as dropout \cite{srivastava2014dropout}, weight-decay, and data augmentation have been developed to provide explicit regularisation, while the dynamics of optimisers such as SGD can provide implicit regularisation, by finding solutions with low-norm weights \cite{chaudhar2017entropySGD,zhang2017understandingDeepGen}. 

%\hl{Can we put some nice broad statements about:} General deep learning, emphasizing main interest/goal to yield better generalisation performance, namely how to avoid overfitting. As the neural network models get complex and deeper, the trend is overparametrisation, numbers of parameters exceeding the training data sizes, which potentially incurs the issue of memorising  training data, thus overfitting is worrisome~[related key citations]. 
%\hl{Also, enumerate some interesting previous works on how to improve generalisation performance of deep networks...} Loss landscapes are complex and non-convex with multiple local minima. In the literature, there were several techniques for improving generalisation. Regularisation techniques during training are important: dropout, weight decay, and also data augmentation strategies. 

A number of studies have linked the flatness of a given training minima to generalisation quality \cite{keskar2017batchGen,chaudhar2017entropySGD}. Searching for flat minima of the loss function is intuitively appealing, as it is obviously beneficial for finding models resilient to data noise and/or model parameter corruption/perturbation. This has led to an increasing number of optimisation methods ~\cite{chaudhar2017entropySGD,sam,adv_def} designed to explicitly search for flat minima. Despite this variety of noteworthy theoretical and empirical work, existing approaches have yet to scalably solve this problem, as developing computationally efficient methods for finding flat minima is non-trivial. 

A seminal method in this area is known as sharpness-aware minimisation (SAM) \cite{sam}. SAM is a mini-max type algorithm that essentially modifies the loss function to report the maximum loss value within the small neighborhood around the current iterate. Optimising with SAM thus prefers flatter minima than conventional SGD. 
However, one of the main drawbacks of SAM is that it uses a Euclidean ball to define the neighborhood, which is inaccurate since loss functions for neural networks are typically defined over probability distributions (e.g., class predictive probabilities), rendering the parameter space non Euclidean. 
Another recent approach called Adaptive SAM (ASAM) \cite{asam} stretches/shrinks the Euclidean ball in accordance with the scales of the parameter magnitudes. However, this approach to determining the flatness ellipsoid of interest is heuristic and might severely degrade the neighborhood structure. Although SAM and ASAM are successful in many empirical tasks, ignorance of the underlying geometry of the model parameter space may lead to suboptimal results. 

In this paper we build upon the ideas of SAM, but address the issue of a principled approach to determining the ellipsoid of interest by considering information geometry~\cite{amari98,murray_rice} of the model parameter space when defining the neighborhood. Specifically, we replace SAM's Euclidean balls with ellipsoids induced by the Fisher information. Our approach, dubbed Fisher SAM, defines more accurate neighborhood structures that conform to the intrinsic metric of the underlying statistical manifold. By way of comparison, SAM may probe the worst-case loss value at either a too nearby or too far point due to using a spherical neighborhood. In contrast Fisher SAM avoids this by probing the worst-case point within the ellipsoid derived from the Fisher information at the current point -- thus providing a more principled and optimisation objective, and improving empirical generalisation performance. %There was another recent approach called Adaptive SAM~\cite{asam}; ASAM stretches/shrinks the Euclidean ball in accordance with the scales of the parameter magnitudes, might be dangerous, potentially destroying the neighborhood structure even severely. 

Our main contributions are as follows:
%%%%
\begin{enumerate}
\itemsep0em
\item We propose a novel information geometry and sharpness aware loss function which addresses the abovementioned issues of the existing flat-minima optimisation approaches.
\item Our Fisher SAM is as efficient as SAM, only requiring double the cost of that of  vanilla SGD, using the gradient magnitude approximation for Fisher information matrix. We also justify this approximation. 
\item We provide a theoretical generalisation bound similar to SAM's using the prior covering proof technique in PAC-Bayes, in which we extend SAM's spherical Gaussian prior set to an ellipsoidal full-covariance set. 
%the spherical Gaussian set from SAM to ellipsoidal full-covariance Gaussian set.
%
\item We demonstrate  improved empirical performance of the proposed FSAM on several benchmark datasets and tasks: image classification, ImageNet overtraining, finetuning; and label-noise robust learning; and robustness to parameter perturbation during inference. 
\end{enumerate}
%%%%

%The paper is organised as follows. In Sec.~\ref{sec:background} we briefly review the recent two sharpness-aware minimisation techniques, SAM and ASAM, also introducing notations. Our new geometry-aware Fisher SAM is described where we provide the generalisation bound for Fisher SAM. Also, we showcase a toy-scale scenario to visualize/illustrate why Fisher SAM addresses drawbacks of SAM and ASAM. After discussing several related works in Sec.~\ref{sec:related}, we report the empirical results in Sec.~\ref{sec:expmt} on various dataset/tasks including image classification, overtraining, finetuning, and robustness to label noise and parameter perturbation. 

%%%%%%%%%%%%%%%%%%%%%%%%%%%%%%%%%%%%%%%%%%%%%%%%%%%%%%%%%%%%%%%%%%%%%%%%%%%%%%%%%%%%%%%%%%%%%%%%%%%%%%%%%%%%%%%%%%%%%%%%%%%%%%%%%%%%%%%%%%%%%%%%%%%%%%%%%%%%%%%%%%%%%%%%%%%%%%%%%%%%%%%%%%%%%%%%%%%%%%%%
\section{Background}\label{sec:background}

%Deep neural networks typically bring highly complex loss landscapes with many local minima. Although having low empirical loss is likely to lead to a model with high generalisation performance, local minima with flat curvatures are preferred to sharp ones due to increased robustness to noise and better generalisation performance~[CITES; Hochreiter and Schmidhuber 1997b, Keskar et al. 2016, Wu et al. 2017]
 
Although flatness/sharpness of the loss function %at a model iterate 
can be formally defined using the Hessian, dealing with (optimizing) the Hessian function is computationally prohibitive. As a remedy, the sharpness-aware minimisation (SAM for short)~\cite{sam} introduced a novel robust loss function, where the new loss at the current iterate is defined as the maximum (worst-case) possible loss within the neighborhood at around it. More formally, considering a $\gamma$-ball neighborhood, the robust loss $\l^\gamma$ is defined as:
%%%%
\begin{align}
l^\gamma(\theta) = \max_{\|\epsilon\|\leq \gamma} l(\theta+\epsilon),
\label{eq:sam_loss}
\end{align}
%%%%
where $\theta$ is the model parameters (iterate), and $l(\theta)$ is the original loss function. Using the first-order Taylor (linear) approximation of $l(\theta+\epsilon)$, %at around $\theta$,
(\ref{eq:sam_loss}) becomes the famous dual-norm problem~\cite{cvx}, admitting a closed-form solution. In the Euclidean (L2) norm case, the solution becomes the normalised gradient,
%%%%
\begin{align}
\epsilon^*_{SAM}(\theta) = \gamma \frac{\nabla l(\theta)}{\|\nabla l(\theta)\|}.
\label{eq:sam_eps}
\end{align}
%%%%
Plugging (\ref{eq:sam_eps}) into (\ref{eq:sam_loss}) defines the SAM loss, while its gradient can be further simplified by ignoring the (higher-order) gradient terms in $\nabla \epsilon^*(\theta)$ for computational tractability:
%%%%
\begin{align}
&l^\gamma_{SAM}(\theta) = l(\theta'), \ \ 
\nabla l^\gamma_{SAM}(\theta) = \frac{\partial l(\theta)}{\partial \theta} \bigg|_{\theta=\theta'} \label{eq:sam_loss_grad} \\ 
&\ \ \ \ \ \ \ \ \textrm{where} \ \ \theta' = \theta + \epsilon^*_{SAM}(\theta). \nonumber
\end{align}
%%%%
In terms of computational complexity, SAM incurs only twice the forward/backward cost of the standard SGD: one forward/backward for computing $\epsilon^*_{SAM}(\theta)$ and the other for evaluating the loss and gradient at $\theta' = \theta + \epsilon^*_{SAM}(\theta)$.

More recently, a drawback of SAM, related to the model parameterisation, was raised by~\cite{asam}, in which SAM's fixed-radius $\gamma$-ball can be sensitive to the parameter re-scaling, weakening the connection between sharpness and generalisation performance. To address the issue, they proposed what is called Adaptive SAM (ASAM for short), which essentially re-defines the neighborhood $\gamma$-ball with the magnitude-scaled parameters. That is,  
%%%%
\begin{align}
l^\gamma_{ASAM}(\theta) = \max_{\|\epsilon / |\theta| \|\leq \gamma} l(\theta+\epsilon),
\label{eq:asam_loss}
\end{align}
%%%%
where $\epsilon / |\theta|$ is the elementwise operation (i.e., $\epsilon_i/|\theta_i|$ for each axis $i$). It leads to the following maximum (worst-case) probe direction within the neighborhood,
%%%%
\begin{align}
\epsilon^*_{ASAM}(\theta) = \gamma \frac{\theta^2 \nabla l(\theta)}{\|\theta \nabla l(\theta) \|} \ \ \ \ \textrm{(elementwise ops.)}.
\label{eq:asam_eps}
\end{align}
%%%%
%where all operations are elementwise. 
The loss and gradient of ASAM are defined similarly as (\ref{eq:sam_loss_grad}) with $\theta' = \theta + \epsilon^*_{ASAM}(\theta)$.

%%%%%%%%%%%%%%%%%%%%%%%%%%%%%%%%%%%%%%%%%%%%%%%%%%%%%%%%%%%%%%%%%%%%%%%%%%%%%%%%%%%%%%%%%%%%%%%%%%%%%%%%%%%%%%%%%%%%%%%%%%%%%%%%%%%%%%%%%%%%%%%%%%%%%%%%%%%%%%%%%%%%%%%%%%%%%%%%%%%%%%%%%%%%%%%%%%%%%%%%
\section{Our Method: Fisher SAM}\label{sec:main}

ASAM's $\gamma$-neighborhood structure is a function of $\theta$, thus not fixed but adaptive to parameter scales in a quite intuitive way (e.g., more perturbation allowed for larger $\theta_i$, and vice versa). However, ASAM's parameter magnitude-scaled neighborhood choice is rather ad hoc, not fully reflecting the underlying geometry of the parameter manifold. 

Note that the loss functions for neural networks are  typically dependent on the model parameters $\theta$ only through the predictive distributions $p(y|x,\theta)$ where $y$ is the target variable (e.g., the negative log-likelihood or cross-entropy loss,  $l(\theta) = \mathbb{E}_{x,y}[ -\log p(y|x,\theta)]$). Hence the geometry of the parameter space is not Euclidean but a {\em statistical manifold} induced by the Fisher information metric of the distribution $p(y|x,\theta)$~\cite{amari98,murray_rice}.

The intuition behind the Fisher information and statistical manifold can be informally stated as follows. When we measure the distance between two neural networks with parameters $\theta$ and $\theta'$, we should compare the underlying distributions $p(y|x,\theta)$ and $p(y|x,\theta')$. The Euclidean distance of the parameters $\|\theta-\theta'\|$ does not capture this distributional divergence because two distributions may be similar even though $\theta$ and $\theta'$ are largely different (in L2 sense), and vice versa. For instance, even though $p(x|\theta) = \mathcal{N}(\mu, 1+0.001\sigma)$ and $p(x|\theta') =  \mathcal{N}(\mu', 1+0.001\sigma')$ with $\theta=(\mu=1, \sigma=10)$ and $\theta'=(\mu'=1, \sigma'=20)$ have large L2 distance, the underlying distributions are nearly the same. That is, the Euclidean distance is not a good metric for the parameters of a distribution family. We need to use {\em statistical divergence} instead, such as the Kullback-Leibler (KL) divergence, from which the Fisher information metric can be derived. 

Based on the idea, we propose a new SAM algorithm that fully reflects the underlying geometry of the statistical manifold of the parameters. In (\ref{eq:sam_loss}) we replace the Euclidean distance by the KL divergence\footnote{To be more rigorous, one can consider the {\em symmetric} Jensen-Shannon divergence, $d(\theta',\theta) = 0.5 \cdot \mathbb{E}_x[\textrm{KL}(\theta'||\theta) + \textrm{KL}(\theta||\theta')]$. For $\theta'\approx \theta$, however, the latter KL term vanishes (easily verified using the derivations similar to those in Appendix~\ref{appsec:kl_fisher}), and it coincides with the KL divergence in (\ref{eq:fsam_loss0}) (up to a constant factor).
}:
%%%%
\begin{align}
&l^\gamma_{FSAM}(\theta) = \max_{d(\theta+\epsilon,\theta) \leq \gamma^2} l(\theta+\epsilon) \ \ \textrm{where} 
\label{eq:fsam_loss0} \\
%&\ \ \ \ \ \textrm{where} \ \ 
&\ \ \ \ \ \ \ \ 
d(\theta',\theta) = \mathbb{E}_x[\textrm{KL}(p(y|x,\theta')||p(y|x,\theta))], \nonumber
\end{align}
%%%%
%where $d(\theta',\theta) = \mathbb{E}_x[\textrm{KL}(p(y|x,\theta')||p(y|x,\theta))]$
which we dub Fisher SAM (FSAM for short). 
For small $\epsilon$, it can be shown that $d(\theta+\epsilon,\theta) \approx \epsilon^\top F(\theta) \epsilon$ (See Appendix~\ref{appsec:kl_fisher} for details), where $F(\theta)$ is the Fisher information matrix,
%%%%
\begin{align}
F(\theta) = \mathbb{E}_x\mathbb{E}_\theta %{p(y|x,\theta)} 
\Big[
\nabla \log p(y|x,\theta) \nabla \log p(y|x,\theta)^\top \Big].
\label{eq:fisher_info}
\end{align}
%%%%
That is, our Fisher SAM loss function can be written as:
%%%%
\begin{align}
l^\gamma_{FSAM}(\theta) = \max_{\epsilon^\top F(\theta) \epsilon \leq \gamma^2} l(\theta+\epsilon).
\label{eq:fsam_loss}
\end{align}
%%%%
We solve (\ref{eq:fsam_loss}) using the first-order approximated objective $l(\theta+\epsilon) \approx l(\theta) + \nabla l(\theta)^\top \epsilon$, leading to a quadratic constrained linear programming problem. % with the ellipsoidal constraint $\epsilon^\top F(\theta) \epsilon \leq \gamma^2$. 
The Lagrangian is
%%%%
\begin{align}
\mathcal{L}(\epsilon,\lambda) = l(\theta) + \nabla l(\theta)^\top \epsilon - \lambda (\epsilon^\top F(\theta) \epsilon - \gamma^2),
\label{eq:lagrangian}
\end{align}
%%%%
and solving $\frac{\partial \mathcal{L}}{\partial \epsilon}=0$ yields $\epsilon^* = \frac{1}{2\lambda} F(\theta)^{-1} \nabla l(\theta)$. Plugging this into the ellipsoidal constraint (from the KKT conditions) determines the optimal $\lambda$, resulting in:
%%%%
\begin{align}
\epsilon^*_{FSAM}(\theta) = \gamma \frac{F(\theta)^{-1} \nabla l(\theta)}{\sqrt{\nabla l(\theta) F(\theta)^{-1} \nabla l(\theta)}}.
\label{eq:fsam_eps}
\end{align}
%%%%
The loss and gradient of Fisher SAM are defined similarly as (\ref{eq:sam_loss_grad}) with $\theta' = \theta + \epsilon^*_{FSAM}(\theta)$.

%\hl{ToDo: cite proximal-gradient, and mention connection to it, eg, $\epsilon$ is constrained to be proximal to the previous estimate in proxgrad, but here not constrained. Also, difference in mini-max??}

\textbf{Approximating Fisher.} Dealing with a large dense matrix $F(\theta)$ (and its inverse) is prohibitively expensive. Following the conventional practice, we consider the empirical diagonalised minibatch approximation, %where %we approximate $F(\theta)$ 
%for a minibatch $B = \{(x_i,y_i)\}$, % as follows:
%%%%
\begin{align}
F(\theta) \approx \frac{1}{|B|} \sum_{i \in B} \textrm{Diag} \big( \nabla \log p(y_i|x_i,\theta) \big)^2,
\label{eq:fisher_approx}
\end{align}
%%%%
for a minibatch $B = \{(x_i,y_i)\}$. $\textrm{Diag}(v)$ is a diagonal matrix with vector $v$ embedded in the diagonal entries. %, and the squaring operation is elementwise. 
However, it is still computationally cumbersome to handle {\em instance-wise} gradients in (\ref{eq:fisher_approx}) using the off-the-shelf auto-differentiation numerical libraries such as PyTorch~\cite{pytorch}, Tensorflow~\cite{tf} or JAX~\cite{jax} that are especially tailored for the {\em batch sum} of gradients for the best efficiency. 
The sum of squared gradients in (\ref{eq:fisher_approx}) has a similar form as the Generalised Gauss-Newton (GGN) approximation for a Hessian~\cite{Schraudolph02,Graves11,Martens14}. Motivated from the {\em gradient magnitude} approximation of Hessian/GGN~\cite{gm,vprop}, we replace the sum of gradient squares with the square of the batch gradient sum,
%%%%
\begin{align}
\hat{F}(\theta) = \textrm{Diag} \bigg( \frac{1}{|B|} \sum_{i \in B} \nabla \log p(y_i|x_i,\theta) \bigg)^2.
\label{eq:fisher_approx_gm}
\end{align}
%%%%
Note that (\ref{eq:fisher_approx_gm}) only requires the gradient of the batch sum of the logits (prediction scores), a very common form efficiently done by the off-the-shelf auto-differentiation libraries. If we adopt the negative log-loss (cross-entropy), it further reduces to $\hat{F}(\theta) = \textrm{Diag}(\nabla l_B(\theta))^2$ where $l_B(\theta)$ is the minibatch estimate of $l(\theta)$. For the inverse of the Fisher information in (\ref{eq:fsam_eps}), we add a small positive regulariser to the diagonal elements before taking the reciprocal. 

Although this gradient magnitude approximation can introduce unwanted bias to the original $F(\theta)$ (the amount of bias being dependent on the degree of cross correlation between $\nabla \log p(y_i|x_i,\theta)$ terms), it is a widely adopted technique for learning rate scheduling also known as average squared gradients in modern optimisers such as RMSprop, Adam, and AdaGrad. Furthermore, the following theorem from~\cite{vprop} justifies the gradient magnitude approximation by relating the squared sum of vectors and the sum of squared vectors. 
%%%%
\begin{theorem}[Rephrased from Theorem 1~\cite{vprop}]
Let $\{v_1,\dots,v_N\}$ be the population vectors, and $B \subset \{1\dots N\}$ be a uniformly sampled (w/ replacement) minibatch with $M=|B|$. Denoting the minibatch and population averages by $\overline{v}(B) = \frac{1}{M}\sum_{i\in B} v_i$ and $\overline{v} = \frac{1}{N}\sum_{i=1}^N v_i$, respectively, %the following holds 
for some constant $\alpha$,
\begin{align}
\vspace{-1.0em}
\frac{1}{N}\sum_{i=1}^N v_i v_i^\top = \alpha \mathbb{E}_B[\overline{v}(B)\overline{v}(B)^\top] + (1-\alpha) \overline{v}\overline{v}^\top.
\label{eq:thm1}
\end{align}
\label{thm:1}
\vspace{-1.5em}
\end{theorem}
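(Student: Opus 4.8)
The plan is to evaluate $\mathbb{E}_B[\overline{v}(B)\overline{v}(B)^\top]$ in closed form and then solve for the population second moment $S := \frac{1}{N}\sum_{i=1}^N v_i v_i^\top$; the algebra will show that the claimed identity \eqref{eq:thm1} holds with the explicit constant $\alpha = M$.

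First I would exploit the with-replacement sampling. Because $B$ is drawn uniformly with replacement, I can write $\overline{v}(B) = \frac{1}{M}\sum_{j=1}^M w_j$, where $w_1,\dots,w_M$ are i.i.d.\ random vectors each uniform on the population $\{v_1,\dots,v_N\}$. Expanding the outer product,
\begin{align}
\mathbb{E}_B\big[\overline{v}(B)\overline{v}(B)^\top\big] = \frac{1}{M^2}\sum_{j=1}^M\sum_{k=1}^M \mathbb{E}\big[w_j w_k^\top\big].
\end{align}
Each $w_j$ satisfies $\mathbb{E}[w_j] = \overline{v}$ and $\mathbb{E}[w_j w_j^\top] = \frac{1}{N}\sum_{i=1}^N v_i v_i^\top = S$, and for $j \neq k$ independence of $w_j$ and $w_k$ gives $\mathbb{E}[w_j w_k^\top] = \overline{v}\,\overline{v}^\top$.

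Next I would split the double sum into the $M$ diagonal terms ($j=k$) and the $M^2 - M$ off-diagonal terms ($j \neq k$), which yields
\begin{align}
\mathbb{E}_B\big[\overline{v}(B)\overline{v}(B)^\top\big] = \frac{1}{M}\,S + \frac{M-1}{M}\,\overline{v}\,\overline{v}^\top.
\end{align}
Solving this linear relation for $S$ gives $S = M\,\mathbb{E}_B[\overline{v}(B)\overline{v}(B)^\top] - (M-1)\,\overline{v}\,\overline{v}^\top$, which is exactly \eqref{eq:thm1} with $\alpha = M = |B|$; equivalently, one checks directly that substituting $\alpha = M$ into the right-hand side of \eqref{eq:thm1} collapses it to $S$.

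There is essentially no hard step here — the whole argument is a second-moment computation. The only point requiring care is the sampling model: independence of the $w_j$ (and hence the clean factorisation of the cross terms) relies on sampling with replacement. I would add a brief remark that under without-replacement sampling the cross term becomes $\frac{1}{N(N-1)}\sum_{i \neq i'} v_i v_{i'}^\top$, which one reduces using $\sum_{i\neq i'} v_i v_{i'}^\top = N^2\,\overline{v}\,\overline{v}^\top - N S$, again producing an explicit (but slightly different) value of $\alpha$; the statement as given corresponds to the with-replacement case.
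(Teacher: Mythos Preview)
Your argument is correct and complete: the direct second-moment expansion under i.i.d.\ with-replacement sampling cleanly yields $\mathbb{E}_B[\overline{v}(B)\overline{v}(B)^\top]=\tfrac{1}{M}S+\tfrac{M-1}{M}\overline{v}\,\overline{v}^\top$, and solving for $S$ gives \eqref{eq:thm1} with $\alpha=M$.

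The paper takes a different route. Rather than expanding the outer product, it works through variances: it writes both sides in terms of the population covariance $\mathbb{V}_i(v_i)=S-\overline{v}\,\overline{v}^\top$ and the sampling covariance $\mathbb{V}_B(\overline{v}(B))=\mathbb{E}_B[\overline{v}(B)\overline{v}(B)^\top]-\overline{v}\,\overline{v}^\top$, and then invokes Cochran's finite-population variance relation $\mathbb{V}_B(\overline{v}(B))=\tfrac{N-M}{M(N-1)}\mathbb{V}_i(v_i)$ to link them, arriving at $\alpha=\tfrac{M(N-1)}{N-M}$. Your approach is more elementary and self-contained (no external citation needed), and it matches the ``with replacement'' hypothesis in the statement exactly. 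The paper's route, by contrast, imports the finite-population correction factor, which is the \emph{without}-replacement formula --- so the two proofs in fact produce different explicit constants, corresponding to the two sampling models. Your closing remark already anticipates this: the without-replacement computation you sketch would recover precisely the paper's $\alpha$. Either way the qualitative content of the theorem (that $S$ is an affine combination of the two outer products) is unaffected.
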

%%%%
%While it is proved in~\cite{vprop}, we provide full proof of Thm.~\ref{thm:1} in Appendix?? for self-containment.
Although it is proved in~\cite{vprop}, we provide full proof here for self-containment. 
%%%%
\begin{proof}
\vspace{-0.8em}
We denote by $\mathbb{V}_i(v_i)$ and  $\mathbb{V}_B(\cdot)$ the population variance and variance over $B$, respectively. 
Let $A$ be the LHS of (\ref{eq:thm1}). Then $\mathbb{V}_i(v_i) = A - \overline{v}\overline{v}^\top$. Also $\mathbb{V}_B(\overline{v}(B)) = \mathbb{E}_B[\overline{v}(B)\overline{v}(B)^\top] - \overline{v}\overline{v}^\top$ since $\mathbb{E}_B[\overline{v}(B)] = \overline{v}$. From Theorem 2.2 of~\cite{cochran}, $\mathbb{V}_B(\overline{v}(B)) = \frac{N-M}{M (N-1)} \mathbb{V}_i(v_i)$. By arranging the terms, with $\alpha=\frac{M (N-1)}{N-M}$, we have $A = \alpha \mathbb{E}_B[\overline{v}(B)\overline{v}(B)^\top] + (1-\alpha) \overline{v}\overline{v}^\top$.
\vspace{-0.8em}
\end{proof}
%%%%
%It states that, given a minibatch of size M , the expectation of the GM approximation is somewhere between the GGN, ie sum of squared grads in (10), and square of the full-batch gradient.}
The theorem essentially implies that the sum of squared gradients (LHS of (\ref{eq:thm1})) gets close to the squared sum of gradients ($\overline{v}(B)\overline{v}(B)^\top$ or $\overline{v}\overline{v}^\top$) if the batch estimate $\overline{v}(B)$ is close enough to its population version $\overline{v}$ \footnote{For instance, the two terms in the RHS of (\ref{eq:thm1}) can be approximately merged into a single squared sum of gradients.}.

%\hl{Put some algoritm/pseudocode?}
%%%%
\begin{algorithm}[t!]
%\vspace{-0.4em}
  \caption{Fisher SAM.}
  \label{alg:fsam}
%\begin{footnotesize}
\begin{small}
\begin{algorithmic}
   \STATE {\bfseries Input:} Training set $S=\{(x_i,y_i)\}$, neighborhood size $\gamma$, \\
   \ \ \ \ \ \ \ \ \ \ \ \ regulariser $\eta$ for inverse Fisher, and learning rate $\alpha$.
   %\STATE {\bfseries Initialize:} Model parameters $\theta$.
   \FOR{$t=1,2,\dots$}
      \STATE 1) Sample a batch $B \sim S$.
      \STATE 2) Compute the gradient of the batch loss $\nabla l_B(\theta)$.
      \STATE 3) Compute the approximate Fisher info $\hat{F}(\theta)$ as per (\ref{eq:fisher_approx_gm}). 
      \STATE 4) Compute $\epsilon_{FSAM}^*(\theta)$ using (\ref{eq:fsam_eps}). 
      \STATE 5) Compute gradient approximation for the Fisher SAM loss,  \\ 
      \ \ \ \ \ \ $\nabla l^\gamma_{FSAM}(\theta) = \frac{\partial l_B(\theta)}{\partial \theta} \Big|_{\theta+\epsilon_{FSAM}^*(\theta)}$.
      \STATE 6) Update: $\theta \leftarrow \theta - \alpha \nabla l^\gamma_{FSAM}(\theta)$.
   \ENDFOR
\end{algorithmic}
\end{small}
%\end{footnotesize}
%\vspace{-0.1em}
\end{algorithm}
%%%%

The Fisher SAM algorithm\footnote{In the current version, we take the vanilla gradient update (step 6 in Alg.~\ref{alg:fsam}). However, it is possible to take the natural gradient update instead (by pre-multiplying the update vector by the inverse Fisher information), which can be beneficial for other methods SGD and SAM, likewise. Nevertheless, we leave it and related further extensive study as future work. 
} is summarized in Alg.~\ref{alg:fsam}. 
%(Main Theorem)
Now we state our main theorem for generalisation bound of Fisher SAM. Specifically we bound the expectation of the generalisation loss over the Gaussian perturbation that aligns with the Fisher information geometry. 
%%%%
\begin{theorem}[Generalisation bound of Fisher SAM] 
Let $\Theta\subseteq\mathbb{R}^k$ be the model parameter space that satisfies the regularity conditions in Appendix~\ref{appsec:main_proof}. % with a bounded diameter. 
For any $\theta\in\Theta$, with probability at least $1-\delta$ over the choice of the training set $S$ ($|S|=n$), the following holds.
\begin{align}
%\vspace{-0.3em}
\mathbb{E}_\epsilon%_{\epsilon\sim\mathcal{N}(0,\rho^2 F(\theta)^{-1})}
[l_D(\theta+\epsilon)] \leq  l^\gamma_{FSAM}(\theta; S) + \sqrt{\frac{O(k + \log \frac{n}{\delta})}{n-1}},
\label{eq:fsam_bound}
\end{align}
where $l_D(\cdot)$ is the generalisation loss, $l^\gamma_{FSAM}(\cdot; S)$ is the empirical Fisher SAM loss as in (\ref{eq:fsam_loss}), and the expectation is over $\epsilon\sim\mathcal{N}(0,\rho^2 F(\theta)^{-1})$ for $\rho\propto\gamma$. 
\label{thm:main}
\end{theorem}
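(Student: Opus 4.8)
The plan is to run the PAC-Bayesian ``prior covering'' argument used for SAM, but with a Gaussian \emph{posterior} and a family of Gaussian \emph{priors} whose covariances are scalar multiples of the Fisher metric $F(\theta)^{-1}$ instead of the identity. Write $l_S$ for the empirical loss on $S$ and $l_D$ for the population loss. Starting from the classical McAllester-type PAC-Bayes bound --- for any $S$-independent prior $P$ on $\Theta$ and any posterior $Q$, with probability at least $1-\delta$,
\begin{align}
\mathbb{E}_{\vartheta\sim Q}[l_D(\vartheta)] \leq \mathbb{E}_{\vartheta\sim Q}[l_S(\vartheta)] + \sqrt{\frac{\textrm{KL}(Q\|P) + \log\frac{n}{\delta}}{2(n-1)}},
\nonumber
\end{align}
I would take $Q=\mathcal{N}(\theta,\rho^2 F(\theta)^{-1})$ and, to avoid committing to a prior scale in advance, a countable family of priors $P_j=\mathcal{N}(0,\sigma_j^2 F(\theta)^{-1})$ on a geometric grid of scales $\sigma_j$; since $F(\theta)$ is a population quantity it is legitimately $S$-independent, so each $P_j$ is an admissible prior. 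Splitting the confidence budget as $\delta_j=\delta\cdot 6/(\pi^2 j^2)$ and taking a union bound makes the inequality above hold for all $j$ simultaneously, at the cost of replacing $\log\frac{n}{\delta}$ by $\log\frac{n}{\delta_j}=\log\frac{n}{\delta}+O(\log j)$.

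\textbf{Step 1 (KL term).} Because $Q$ and each $P_j$ have covariances equal to the \emph{same} matrix $F(\theta)^{-1}$ up to a scalar, the $F(\theta)^{-1}$ factors cancel in the trace and log-determinant contributions of the Gaussian KL, and a direct computation gives
\begin{align}
\textrm{KL}(Q\|P_j) = \tfrac12\Big( k\tfrac{\rho^2}{\sigma_j^2} + \tfrac{\theta^\top F(\theta)\theta}{\sigma_j^2} - k + k\log\tfrac{\sigma_j^2}{\rho^2}\Big),
\nonumber
\end{align}
i.e. the Euclidean $\|\theta\|^2$ of SAM's proof is replaced by the Fisher norm $\theta^\top F(\theta)\theta$. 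Choosing the grid point $\sigma_j$ closest to the minimiser in $\sigma_j$ of this expression --- which the regularity conditions of Appendix~\ref{appsec:main_proof} guarantee is finite and well separated from $0$ --- yields $\textrm{KL}(Q\|P_j)=O\big(k+\log(\theta^\top F(\theta)\theta/\rho^2)\big)$, and the $O(\log j)$ cost of the covering is absorbed into the same order. This is what produces the $O(k+\log\frac{n}{\delta})$ numerator in \eqref{eq:fsam_bound}.

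\textbf{Step 2 (empirical term).} Writing $\epsilon=\rho\,F(\theta)^{-1/2}z$ with $z\sim\mathcal{N}(0,I_k)$, the ellipsoidal constraint becomes spherical in $z$: $\epsilon^\top F(\theta)\epsilon=\rho^2\|z\|^2$, so the event $\{\epsilon^\top F(\theta)\epsilon\le\gamma^2\}$ is exactly the $\chi^2_k$ tail event $\{\|z\|^2\le\gamma^2/\rho^2\}$. Taking $\rho\propto\gamma$ with the constant small enough that $\gamma^2/\rho^2\gtrsim k+\log n$, standard $\chi^2$ concentration makes this event hold with probability at least $1-1/\sqrt{n}$; on it, $l_S(\theta+\epsilon)\le l^\gamma_{FSAM}(\theta;S)$ by definition of the maximum over the Fisher ellipsoid in \eqref{eq:fsam_loss}, and on the complement the contribution is controlled by boundedness of $l$ (or, as in SAM, by a refined tail estimate), contributing only a term of order $\sqrt{(k+\log\frac{n}{\delta})/(n-1)}$. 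Hence $\mathbb{E}_\epsilon[l_S(\theta+\epsilon)]\le l^\gamma_{FSAM}(\theta;S)+O\big(\sqrt{(k+\log\frac{n}{\delta})/(n-1)}\big)$, and combining with Step 1 gives \eqref{eq:fsam_bound}.

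\textbf{Main obstacle.} The algebra is routine; the delicate part is the role of $F(\theta)$. One needs the regularity conditions to ensure that $F(\theta)$ is well-conditioned (eigenvalues bounded away from $0$ and $\infty$) and that $\theta^\top F(\theta)\theta$ is finite, so that the change of variables $\epsilon\mapsto z$ is well defined, the grid optimisation in Step 1 is valid, and the $\chi^2$ concentration in Step 2 is uniform. The finest calibration is choosing the proportionality constant in $\rho\propto\gamma$ so that the $\chi^2$ tail is simultaneously small enough for Step 2 while not inflating $k\rho^2/\sigma_j^2$ (hence the KL) in Step 1 --- i.e.\ balancing the perturbation scale against the neighbourhood radius and the dimension $k$.
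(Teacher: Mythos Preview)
Your Step~2 (whitening $\epsilon = \rho F(\theta)^{-1/2} z$ and using $\chi^2_k$ concentration to pass from the Gaussian expectation to the Fisher-ellipsoid maximum) is essentially the paper's argument and is fine.

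The gap is in Step~1. You take the priors to be $P_j = \mathcal{N}(0,\sigma_j^2 F(\theta)^{-1})$ and argue that ``since $F(\theta)$ is a population quantity it is legitimately $S$-independent.'' This is the crux, and it does not hold. The function $F(\cdot)$ is indeed a population expectation, but its \emph{value} $F(\theta)$ depends on the evaluation point $\theta$. The theorem is only useful if it applies to a $\theta$ that is chosen after seeing $S$ (e.g.\ the output of training); in that case $\theta = \theta(S)$, hence $F(\theta(S))$ is $S$-dependent, and your $P_j$ is not an admissible PAC-Bayes prior. Put differently, to get a bound that holds for all $\theta\in\Theta$ simultaneously (which is what the paper proves and what is needed to instantiate at a data-dependent $\theta$), the prior family cannot vary with the particular $\theta$ at which you later evaluate the bound. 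Your cancellation of $F(\theta)^{-1}$ in the KL is exactly what makes the computation easy, but it is bought by smuggling the posterior covariance into the prior.

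This is precisely the difficulty the paper's proof is built around. There the prior set is $P_j = \mathcal{N}(\overline{\theta_j},\rho^2 F(\overline{\theta_j})^{-1})$ for \emph{fixed, data-independent} centres $\overline{\theta_j}$ covering $\Theta$ by Fisher ellipsoids $R_j$; the regularity assumptions (Lipschitz-type control of the eigenvalues of $F$ and of $F(\cdot)^{-1}$ within each $R_j$) are what allow one to bound $\mathrm{KL}\big(\mathcal{N}(\theta,\rho^2 F(\theta)^{-1})\,\|\,\mathcal{N}(\overline{\theta_{j^*}},\rho^2 F(\overline{\theta_{j^*}})^{-1})\big)$ by $O(k)$ even though the two covariances are genuinely different matrices. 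The $\log J = O(k)$ term then comes from the union bound over the cover. Your grid is only over a scalar $\sigma_j$, so it cannot cover the variation of $F(\theta)^{-1}$ across $\Theta$; to repair the argument you would need a cover of covariance \emph{shapes}, which is exactly the ellipsoid partition of the appendix.
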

%%%%
%%%%
\begin{remark}
Compared to SAM's generalisation bound in Appendix A.1 of~\cite{sam}, the complexity term is asymptotically identical (only some constants are different). However, the expected generalisation loss in the LHS of (\ref{eq:fsam_bound}) is different: we have perturbation of $\theta$ aligned with the Fisher geometry of the model parameter space (i.e., $\epsilon\sim\mathcal{N}(0,\rho^2 F(\theta)^{-1})$), while in SAM they bound the generalisation loss averaged over {\em spherical} Gaussian perturbation, $\mathbb{E}_{\epsilon\sim\mathcal{N}(0,\rho^2 I)}
[l_D(\theta+\epsilon)]$. The latter might be an inaccurate measure for the average loss since the perturbation does not conform to the geometry of the underlying %statistical
manifold. 
\end{remark}
%%%%
The full proof is provided in Appendix~\ref{appsec:main_proof}, and we describe the proof sketch here. 
%%%%
\begin{proof}[Proof (sketch)]
\vspace{-0.8em}
Our proof is an extension of~\cite{sam}'s proof, in which the PAC-Bayes bound~\cite{mcallester99} is considered for a pre-defined set of prior distributions, among which the one closest to the posterior is chosen to tighten the bound. 
In~\cite{sam}, the posterior is a spherical Gaussian (corresponding to a Euclidean  ball) with the variance being {\em independent} of the current model $\theta$. Then the prior set can be pre-defined as a series of spherical Gaussians with increasing variances so that there always exists a member in the prior set that matches the posterior by only small error. 
In our case, however, the posterior is a Gaussian with Fisher-induced ellipsoidal covariance, thus covariance being {\em dependent} on the current $\theta$. This implies that the prior set needs to be pre-defined more sophisticatedly to adapt to a not-yet-seen posterior. Our key idea is to partition the model parameter space $\Theta$ into many small Fisher ellipsoids 
$R_j \triangleq \{\theta \ \vert \  (\theta-\overline{\theta_j})^\top F(\overline{\theta_j})(\theta-\overline{\theta_j}) \leq r_j^2 \}, j=1,\dots,J$, for some fixed points $\{\overline{\theta_j}\}$, and we define the priors to be aligned with these ellipsoids. Then it can be shown that under some regularity conditions, any Fisher-induced ellipsoidal covariance of the posterior can match one of the $R_j$'s with small error, thus tightening the bound. 
\end{proof}
%%%

%%%%%%%%%%%%%%%%%%%%%%%%%%%%%%%%%%%%%%%%%%%%%%%%%%
\subsection{Fisher SAM Illustration: Toy 2D Experiments}\label{sec:toy_expmt}

We devise a synthetic setup with 2D parameter space to illustrate the merits of the proposed FSAM against previous SAM and ASAM.
The model we consider is a univariate Gaussian, $p(x;\theta) = \mathcal{N}(x; \mu, \sigma^2)$ where $\theta = (\mu,\sigma) \in \mathbb{R} \times \mathbb{R}_{+} \subset \mathbb{R}^2$. For the loss function, we aim to build a one with two local minima, one with sharp curvature, the other flat. We further confine the loss to be a function of the model likelihood $p(x;\theta)$ so that the the parameter space becomes a manifold with the Fisher information metric. To this end, we define the loss function as a negative log-mixture of two KL-driven energy models. More specifically, %our loss $l(\theta)$ is:
%%%%
\begin{align}
& l(\theta) = -\log \Big( \alpha_1 e^{-E_1(\theta)/\beta_1^2} + \alpha_2  e^{-E_2(\theta)/\beta_2^2} \Big), \label{eq:toy_loss} \\
& \ \ \ \ \ \ \textrm{where} \ \ E_i(\theta) = \textrm{KL}\big( p(x; \theta) || N(x; m_i, s_i^2) \big), \ i=1,2. \nonumber 
\end{align}
%%%%
%There are several constants which are set as: 
We set constants as: 
$(m_1,s_1,\alpha_1,\beta_1)=(20,30,0.7,1.8)$ and $(m_2,s_2,\alpha_2,\beta_2)=(-20,10,0.3,1.2)$. 
%$m_1=20$, $s_1=30$, $\alpha_1=0.7$, $\beta_1=1.8$
% mu2, sig2, alp2, bet2 = -20, 10, 0.3, 1.2
Since $\beta_i$ determines the component scale, we can guess that the flat minimum is at around $(m_1,s_1)$ (larger $\beta_1$), and the sharp one at around $(m_2,s_2)$ (smaller $\beta_2$). The contour map of $l(\theta)$ is depicted in Fig.~\ref{fig:toy_loss}, where the two minima numerically found are: $\theta^{flat} = (19.85,29.95)$ and %($l=0.51$, $H=0.001$), 
$\theta^{sharp} = (-15.94,13.46)$ as marked in the figure. %($l=0.49$, $H=0.006$), with their loss values and Hessian traces shown in parentheses. 
We prefer the flat minimum (marked as star/blue) to the sharp one (dot/magenta) even though $\theta^{sharp}$ attains slightly lower loss.

Comparing the neighborhood structures at the current iterate $(\mu,\sigma)$, %for the competing models, 
SAM has a circle, $\{(\epsilon_1,\epsilon_2) \ | \ \epsilon_1^2 + \epsilon_2^2 \leq \gamma^2\}$, whereas FSAM has an ellipse, $\{(\epsilon_1,\epsilon_2) \ | \ \epsilon_1^2/\sigma^2 + \epsilon_2^2/(\sigma^2/2) \leq \gamma^2\}$ since the Fisher information for Gaussian is $F(\mu,\sigma) = \textrm{Diag}(1/\sigma^2, 2/\sigma^2)$. Note that the latter is the intrinsic metric for the underlying parameter manifold. Thus when $\sigma$ is large (away from 0), it is a valid strategy to explore more aggressively to probe the worst-case loss in both axes (as FSAM does). On the other hand, 
%our Fisher SAM can explore more aggressively to,
SAM is unable to adapt to the current iterate %considers relatively too little perturbations,
and probes relatively too little, 
which hinders finding a sensible robust loss function. This scenario is illustrated in Fig.~\ref{fig:toy_sam_fails}. The initial iterate (diamond/green) has a large $\sigma$ value, and FSAM makes aggressive exploration in both axes, helping us move toward the flat minimum. However, SAM makes too narrow exploration, merely converging to the relatively nearby sharp minimum.
For ASAM, the neighborhood at current iterate $(\mu,\sigma)$ is the magnitude-scaled ellipse, $\{(\epsilon_1,\epsilon_2) \ | \ \epsilon_1^2/\mu^2 + \epsilon_2^2/\sigma^2 \leq \gamma^2\}$. Thus when $\mu$ is close to 0, for instance, $\epsilon_1$ is not allowed to perturb much, hindering effective exploration of the parameter space toward robustness, as illustrated in Fig.~\ref{fig:toy_asam_fails}. 

%%%%
\begin{figure}%[t!]
%\vspace{-1.5em}
\begin{center}
%
%\begin{subfigure}[b]{0.9\textwidth}
\centering
\includegraphics[trim = 2mm 2mm 2mm 2mm, clip, scale=0.314]{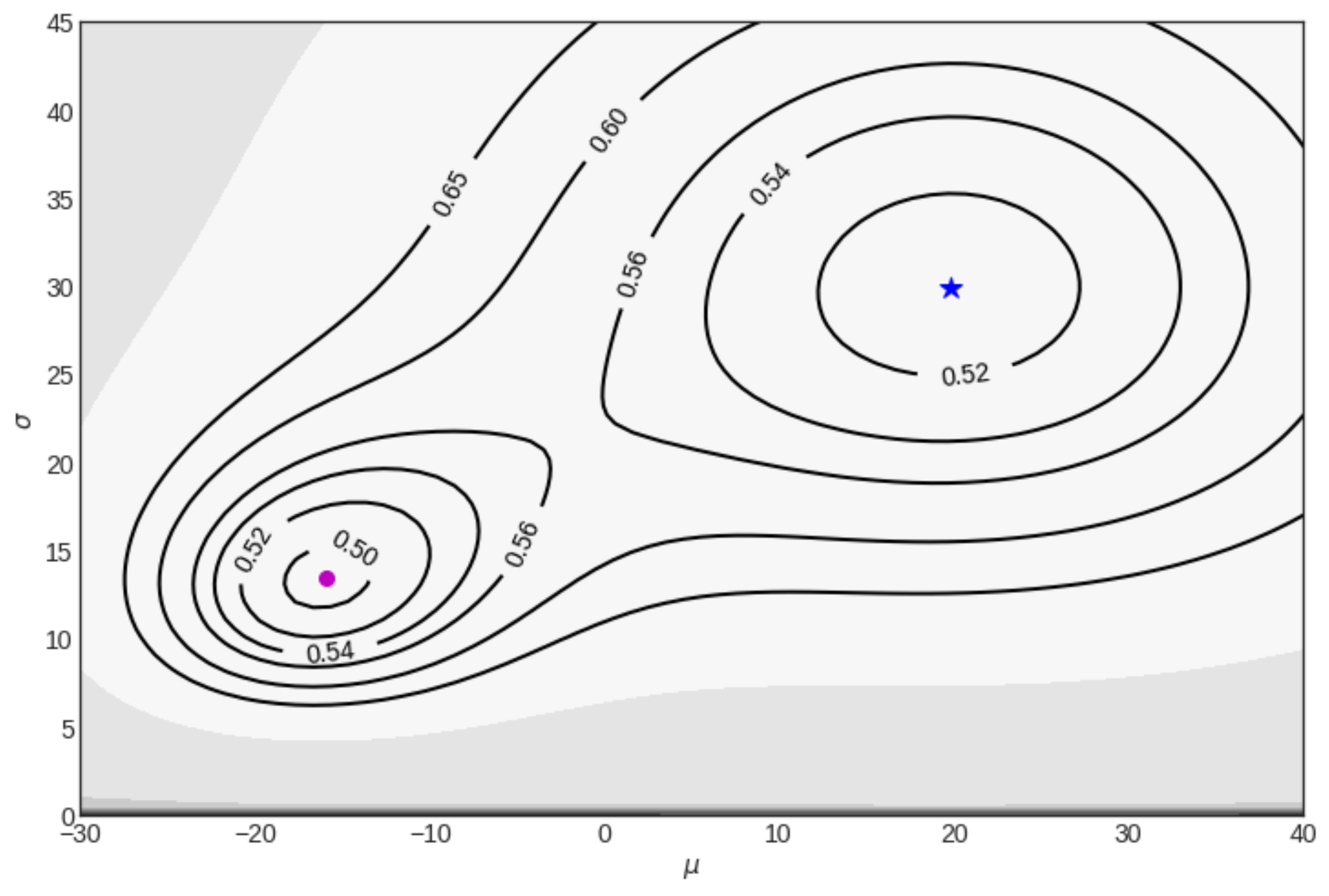}
\end{center}
\vspace{-1.8em}
\caption{(Toy experiment) Contour plot of the loss function. The flat minimum is shown as star/blue $\theta^{flat} = (19.85,29.95)$ ($l=0.51$, $H=0.001$), and the sharp one as circle/magenta $\theta^{sharp} = (-15.94,13.46)$ ($l=0.49$, $H=0.006$), with their loss values and Hessian traces shown in parentheses. 
}
\vspace{-1.5em}
\label{fig:toy_loss}
\end{figure}
%%%%

%%%%
\begin{figure*}%[t!]
%\vspace{-1.5em}
\begin{center}
%
%\begin{subfigure}[b]{0.9\textwidth}
\centering
\includegraphics[trim = 2mm 2mm 2mm 2mm, clip, scale=0.331]{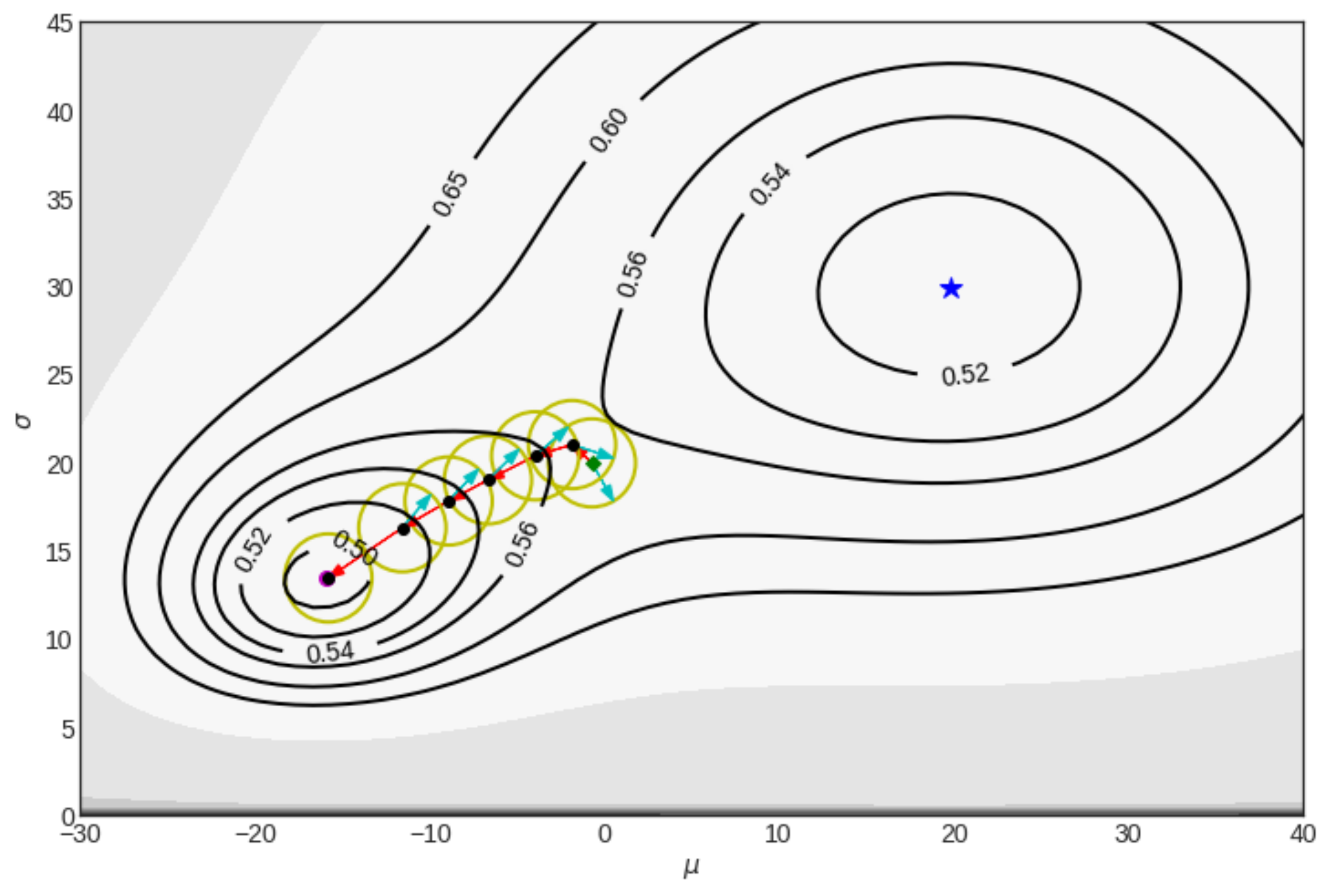} \ \
\includegraphics[trim = 2mm 2mm 2mm 2mm, clip, scale=0.331]{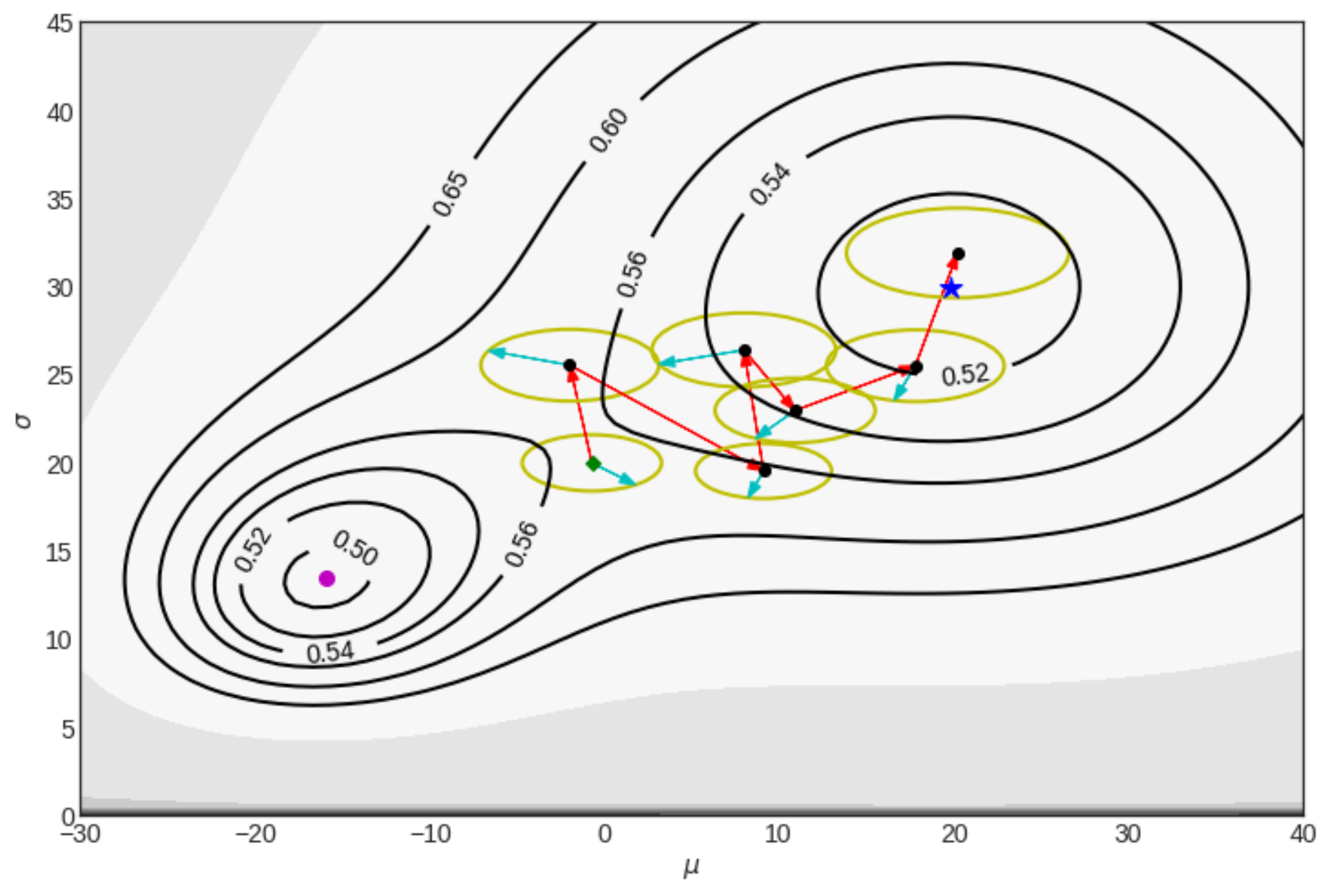}
\end{center}
\vspace{-1.0em}
\caption{(Toy experiment) \textbf{SAM vs.~FSAM}. X-axis is $\mu$ and Y-axis is $\sigma$.  (\textbf{Left$=$SAM}) SAM failed due to the inaccurate neighborhood structure of Euclidean ball. (\textbf{Right$=$FSAM}) FSAM finds the flat minimum due to the accurate neighborhood structure from Fisher information metric. 
Initial iterate shown as diamond/green; the neighborhood ball is depicted as yellow circle/ellipse; the worst-case probe within the neighborhood is indicated by cyan arrow, update direction is shown as red arrow. The sizes of circles/ellipses are adjusted for better visualisation. 
}
%\vspace{-0.2em}
\label{fig:toy_sam_fails}
\end{figure*}
%%%%
%
%%%%
\begin{figure*}%[t!]
%\vspace{-1.5em}
\begin{center}
%
%\begin{subfigure}[b]{0.9\textwidth}
\centering
\includegraphics[trim = 2mm 2mm 2mm 2mm, clip, scale=0.331]{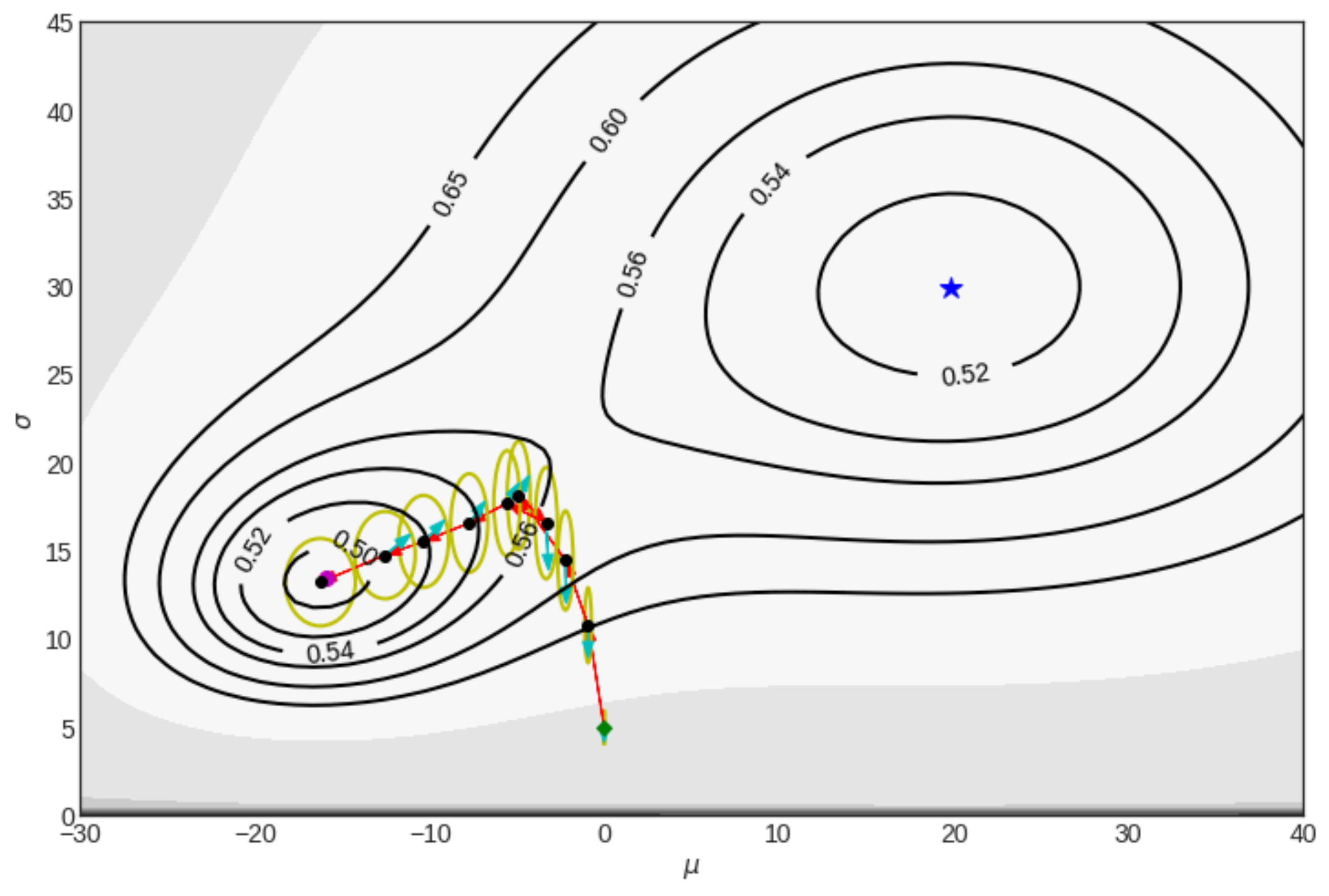} \ \
\includegraphics[trim = 2mm 2mm 2mm 2mm, clip, scale=0.331]{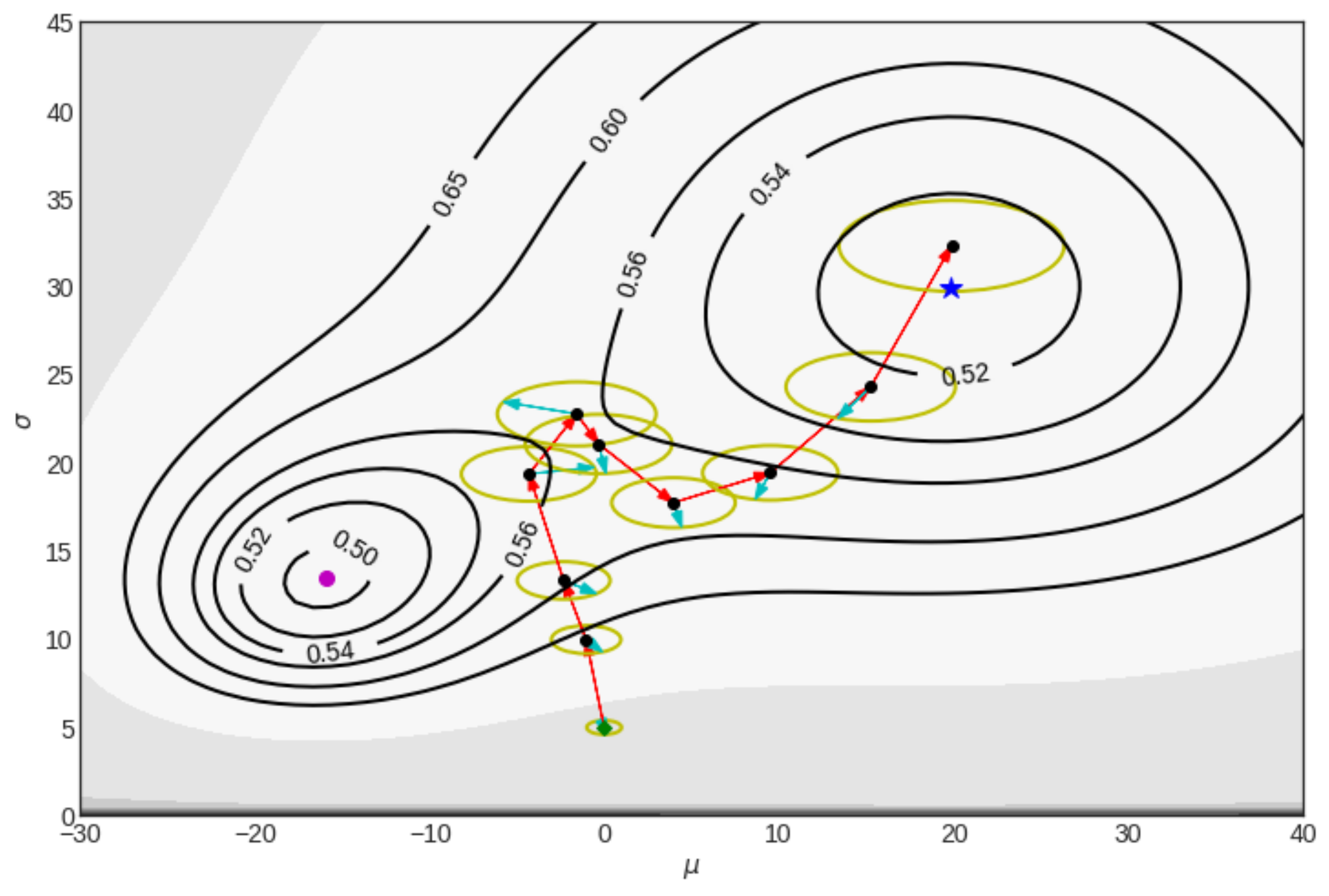}
\end{center}
\vspace{-1.0em}
\caption{(Toy experiment) \textbf{ASAM vs.~FSAM}. X-axis is $\mu$ and Y-axis is $\sigma$.  (\textbf{Left$=$ASAM}) ASAM failed due to the inaccurate neighborhood structure. Especially when the magnitude of a particular parameter value, in this case $\mu$, is small (close to 0), it overly penalises perturbation along the axis. The parameter $\mu$ has small magnitude initially, which forms an incorrect neighborhood ellipse overly shrunk along the X-axis, preventing it from finding an effective worst-case probe through X-axis perturbation. 
(\textbf{Right$=$FSAM}) FSAM finds the flat minimum due to the accurate neighborhood structure from Fisher information metric. 
Initial iterate shown as diamond/green; the neighborhood ball is depicted as yellow circle/ellipse; the worst-case probe within the neighborhood is indicated by cyan arrow, update direction is shown as red arrow. 
}
%\vspace{-0.2em}
\label{fig:toy_asam_fails}
\end{figure*}
%%%%
%
%%%% 
\begin{table*}%[t!]
\vspace{-1.0em}
\caption{Test accuracies on CIFAR-10 and CIFAR-100.
}
\vspace{+0.3em}
\centering
%\vspace{-0.6em}
%\vskip 0.05in
%\begin{scriptsize}
\begin{footnotesize}
%\begin{small}
%\begin{sc}
\centering
\scalebox{0.95}{
\begin{tabular}{l|cccc|cccc}
\toprule
 & \multicolumn{4}{c|}{CIFAR-10} & \multicolumn{4}{c}{CIFAR-100} \\
\cline{2-9}
 & SGD\Tstrut & SAM & ASAM & FSAM & SGD & SAM & ASAM & FSAM 
\\ \hline
DenseNet-121\Tstrut & $91.83^{\pm 0.13}$ & $92.44^{\pm 0.28}$ & $92.70^{\pm 0.30}$ & $\pmb{92.81}^{\pm 0.17}$ & $71.26^{\pm 0.15}$ & $72.83^{\pm 0.01}$ & $73.10^{\pm 0.23}$ & $\pmb{73.15}^{\pm 0.33}$ \\ 
ResNet-20\Tstrut & $92.91^{\pm 0.13}$ & $92.99^{\pm 0.16}$ & $92.92^{\pm 0.15}$ & $\pmb{93.18}^{\pm 0.11}$ & $68.24^{\pm 0.34}$ & $68.61^{\pm 0.26}$ & $68.68^{\pm 0.11}$ & $\pmb{69.04}^{\pm 0.30}$ \\ 
ResNet-56\Tstrut & $95.37^{\pm 0.06}$ & $95.59^{\pm 0.14}$ & $95.63^{\pm 0.07}$ & $\pmb{95.71}^{\pm 0.08}$ & $75.52^{\pm 0.27}$ & $76.44^{\pm 0.26}$ & $76.32^{\pm 0.14}$ & $\pmb{76.86}^{\pm 0.16}$ \\ 
VGG-19-BN\Tstrut & $95.70^{\pm 0.09}$ & $96.11^{\pm 0.09}$ & $95.97^{\pm 0.10}$ & $\pmb{96.17}^{\pm 0.07}$ & $73.45^{\pm 0.32}$ & $77.25^{\pm 0.24}$ & $74.36^{\pm 0.19}$ & $\pmb{77.86}^{\pm 0.22}$ \\ 
ResNeXt-29-32x4d\Tstrut & $96.55^{\pm 0.15}$ & $97.27^{\pm 0.10}$ & $97.29^{\pm 0.06}$ & $\pmb{97.47}^{\pm 0.05}$ & $79.36^{\pm 0.19}$ & $82.63^{\pm 0.16}$ & $82.41^{\pm 0.31}$ & $\pmb{82.92}^{\pm 0.15}$ \\ 
WRN-28-2\Tstrut & $95.56^{\pm 0.22}$ & $96.28^{\pm 0.14}$ & $96.25^{\pm 0.07}$ & $\pmb{96.51}^{\pm 0.08}$ & $78.85^{\pm 0.25}$ & $79.87^{\pm 0.13}$ & $80.17^{\pm 0.14}$ & $\pmb{80.22}^{\pm 0.26}$ \\ 
WRN-28-10\Tstrut & $97.12^{\pm 0.10}$ & $97.56^{\pm 0.06}$ & $97.63^{\pm 0.04}$ & $\pmb{97.89}^{\pm 0.07}$ & $83.47^{\pm 0.21}$ & $\pmb{85.60}^{\pm 0.05}$ & $85.20^{\pm 0.18}$ & $\pmb{85.60}^{\pm 0.11}$ \\ \hline
PyramidNet-272\Tstrut & $97.73^{\pm 0.04}$ & $97.91^{\pm 0.02}$ & $97.91^{\pm 0.01}$ & $\pmb{97.93}^{\pm 0.04}$ & $83.46^{\pm 0.02}$ & $85.19^{\pm 0.04}$ & $85.05^{\pm 0.11}$ & $\pmb{86.93}^{\pm 0.14}$ \\ 
\bottomrule
\end{tabular}
}
%\end{sc}
\end{footnotesize}
%\end{scriptsize}
%\end{small}
\label{tab:cifar}
\end{table*}
%%%%

%%%%%%%%%%%%%%%%%%%%%%%%%%%%%%%%%%%%%%%%%%%%%%%%%%%%%%%%%%%%%%%%%%%%%%%%%%%%%%%%%%%%%%%%%%%%%%%%%%%%%%%%%%%%%%%%%%%%%%%%%%%%%%%%%%%%%%%%%%%%%%%%%%%%%%%%%%%%%%%%%%%%%%%%%%%%%%%%%%%%%%%%%%%%%%%%%%%%%%%%
\section{Related Work}\label{sec:related}
Interest in flat minima for neural networks dates back to at least  \cite{flat_minima95,flat_minima97}, who characterise flatness as the size of the region around which the training loss remains low. Many studies have since investigated the link between flat minima and generalisation performance  \cite{keskar2017batchGen,Neyshabur17,dr17,Dinh17}.
In particular, the correlation between sharpness and generalisation performance was studied with diverse measures empirically on large-scale experiments: \cite{jiang19}. 
Beyond the IID setting, \cite{cha2021swad} analyse the impact of flat minima on generalisation performance under domain-shift. 

Motivated by these analyses, several methods have been proposed to visualise and optimise for flat minima, with the aim of improving generalisation. \cite{hao2018lossLandscape} developed methods for visualising loss surfaces to inspect minima shape. \cite{zhu2019anisotropic} analyse how the noise in SGD promotes preferentially discovering flat minima over sharp minima, as a potential  explanation for SGD's strong generalisation. 
Entropy-SGD \cite{chaudhar2017entropySGD} biases gradient descent toward flat minima by regularising local entropy. Stochastic Weight Averaging (SWA) \cite{izmailov2018averaging} was proposed as an approach to find flatter minima by parameter-space averaging of an ensemble of models' weights. The state-of-the-art SAM \cite{sam} and ASAM \cite{asam} find flat minima by reporting the worst-case loss within a ball around the current iterate. Our Fisher SAM builds upon these by providing a principled approach to defining a non-Euclidean ball within which to compute the worst-case loss.

%Deep neural networks typically bring highly complex loss landscapes with many local minima. Although having low empirical loss is likely to lead to a model with high generalisation performance, local minima with flat curvatures are preferred to sharp ones due to increased robustness to noise and better generalisation performance~[CITES; Hochreiter and Schmidhuber 1997b, Keskar et al. 2016, Wu et al. 2017]

%%%%%%%%%%%%%%%%%%%%%%%%%%%%%%%%%%%%%%%%%%%%%%%%%%%%%%%%%%%%%%%%%%%%%%%%%%%%%%%%%%%%%%%%%%%%%%%%%%%%%%%%%%%%%%%%%%%%%%%%%%%%%%%%%%%%%%%%%%%%%%%%%%%%%%%%%%%%%%%%%%%%%%%%%%%%%%%%%%%%%%%%%%%%%%%%%%%%%%%%
\section{Experiments}\label{sec:expmt}

We empirically demonstrate generalisation performance (Sec.~\ref{sec:image_classify}--\ref{sec:transfer}) and noise robustness (Sec.~\ref{sec:adversarial},~\ref{sec:label_noise}) of the proposed Fisher SAM method. As competing approaches, we consider the vanilla (non-robust) optimisation (\textbf{SGD}) as a baseline, and two latest SAM approaches: \textbf{SAM}~\cite{sam} that uses Euclidean-ball neighborhood and \textbf{ASAM}~\cite{asam} that employs  parameter-scaled neighborhood. Our approach, forming Fisher-driven neighborhood, is denoted by \textbf{FSAM}. 

For the implementation of Fisher SAM in the experiments, instead of simply adding a  regulariser to each diagonal entry $f_i$ of the Fisher information matrix $\hat{F}(\theta)$, we take $1/(1+\eta f_i)$ as the diagonal entry of the inverse Fisher. Hence $\eta$ serves as anti-regulariser (e.g., small $\eta$ diminishes or regularises the Fisher impact). We find this implementation performs better than simply adding a regulariser. In most of our experiments, we set $\eta=1.0$. 
Certain multi-GPU/TPU gradient averaging heuristics called the $m$-sharpness trick empirically improves the generalisation performance of SAM and ASAM~\cite{sam}. However, since the trick is theoretically less justified, we do not use the trick in our experiments for fair comparison. 

%Optimisation hyperparameters (unless stated otherwise): batch size 256, epochs 200, lr 0.1

%%%%%%%%%%%%%%%%%%%%%%%%%%%%%%%%%%%%%%%%%%%%%%%%%%
\subsection{Image Classification}\label{sec:image_classify}

The goal of this section is to empirically compare generalisation performance of the competing algorithms: SGD, SAM, ASAM, and our FSAM on image classification problems. Following the experimental setups suggested in~\cite{sam,asam}, we employ several ResNet~\cite{resnet}-based backbone networks including WideResNet~\cite{wrn}, VGG~\cite{vgg}, DenseNet~\cite{densenet}, ResNeXt~\cite{resnext}, and PyramidNet~\cite{pyramidnet}, on the CIFAR-10/100 datasets~\cite{cifar}.
Similar to~\cite{sam,asam}, we use the SGD optimiser with momentum 0.9, weight decay 0.0005, initial learning rate 0.1, cosine learning rate scheduling~\cite{coslr}, for up to 200 epochs (400 for SGD) with batch size 128. For the PyramidNet, we use batch size 256, initial learning rate $0.05$ trained up to 900 epochs (1800 for SGD). 
We also apply Autoaugment~\cite{autoaug},  Cutout~\cite{cutout} data augmentation, and the label smoothing~\cite{labsm} with factor $0.1$ is used for defining the loss function. 

We perform the grid search to find best hyperparameters $(\gamma,\eta)$ for FSAM, and they are $(\gamma=0.1,\eta=1.0)$ for both CIFAR-10 and CIFAR-100 across all backbones except for PyramidNet.  For the PyramidNet on CIFAR-100, we set $(\gamma=0.5,\eta=0.1)$. For SAM and ASAM, we follow the best hyperparameters reported in their papers: (SAM) $\gamma=0.05$ and (ASAM) $\gamma=0.5,\eta=0.01$ for CIFAR-10 and (SAM) $\gamma=0.1$ and (ASAM) $\gamma=1.0,\eta=0.1$ for CIFAR-100. For the PyramidNet, (SAM) $\gamma=0.05$ and (ASAM) $\gamma=1.0$.
The results are summarized in Table~\ref{tab:cifar}, where Fisher SAM consistently outperforms SGD and previous SAM approaches for all backbones. This can be attributed to FSAM's correct neighborhood estimation that respects the underlying geometry of the parameter space.

%%%%%%%%%%%%%%%%%%%%%%%%%%%%%%%%%%%%%%%%%%%%%%%%%%
\subsection{Extra (Over-)Training on ImageNet}\label{sec:imagenet}

For a large-scale experiment, we consider the ImageNet dataset~\cite{deng2009imagenet}.  %which contains about $1$M training and $100$K test images. 
We use the DeiT-base (denoted by DeiT-B) vision transformer model~\cite{pmlr-v139-touvron21a} as a powerful backbone network. Instead of training the DeiT-B model from the scratch, we use the publicly available\footnote{\url{https://github.com/facebookresearch/deit}} ImageNet pre-trained parameters as a warm start, and perform finetuning with the competing loss functions. Since the same dataset is used for pre-training and finetuning, it can be better termed extra/over-training. 

The main goal of this experimental setup is to see if robust sharpness-aware loss functions in the extra training stage can further improve the test performance. 
First, we measure the test performance of the pre-trained DeiT-B model, which is $81.94\%$ (Top-1) and $95.63\%$ (Top-5). After three epochs of extra training, the test accuracies of the competing approaches are summarized in Table~\ref{tab:imagenet}. For extra training, we use hyperparameters: SAM ($\gamma=0.05$), ASAM ($\gamma=1.0,\eta=0.01$), and FSAM ($\gamma=0.5,\eta=0.1$) using the SGD optimiser with batch size 256, weight decay $0.0001$, initial learning rate $10^{-5}$ and the cosine scheduling. 
Although the improvements are not very drastic, the sharpness-aware loss functions appear to move the pre-trained model further toward points that yield better generalisation performance, and our FSAM attains the largest improvement among other SAM approaches. 

%%%% 
\begin{table}%[t!]
\vspace{-1.0em}
\caption{Extra-training results (test accuracies) on ImageNet. Before extra-training starts, $81.94\%$ (Top-1) and $95.63\%$ (Top-5).
}
\vspace{+0.3em}
\centering
%\vspace{-0.6em}
%\vskip 0.05in
%\begin{scriptsize}
\begin{footnotesize}
%\begin{small}
%\begin{sc}
\centering
\scalebox{0.95}{
\begin{tabular}{c|cccc}
\toprule
%\multicolumn{5}{c}{CIFAR-10} \\
%\midrule
 & SGD\Tstrut & SAM & ASAM & FSAM \\ \midrule %\hline
Top-1\Tstrut & $81.97^{\pm 0.01}$ & $81.99^{\pm 0.01}$ & $82.03^{\pm 0.04}$ & $\pmb{82.17}^{\pm 0.01}$\\ \hline
Top-5\Tstrut & $95.61^{\pm 0.01}$ & $95.71^{\pm 0.03}$ & $95.83^{\pm 0.04}$ & $\pmb{95.90}^{\pm 0.01}$ \\ 
\bottomrule
\end{tabular}
}
%\end{sc}
\end{footnotesize}
%\end{scriptsize}
%\end{small}
\label{tab:imagenet}
\end{table}
%%%%

%%%%%%%%%%%%%%%%%%%%%%%%%%%%%%%%%%%%%%%%%%%%%%%%%%
\subsection{Transfer Learning by Finetuning}\label{sec:transfer}

One of the powerful features of the deep neural network models trained on extremely large datasets, is the transferability, that is, the models tend to adapt easily and quickly to novel target datasets and/or downstream tasks by finetuning. 
We use the vision transformer model ViT-base with $16\times 16$ patches (ViT-B/16)~\cite{vit} pre-trained on the ImageNet (with publicly available checkpoints), and finetune the model on the target datasets: CIFAR-100, Stanford Cars~\cite{cars}, and Flowers~\cite{flowers}. 
%, and Oxford Pet~\cite{oxford_pet} (?). Brief descriptions about the datasets?
We run SGD, SAM ($\gamma=0.05$), ASAM ($\gamma=0.1,\eta=0.01$), and FSAM ($\gamma=0.1,\eta=1.0$) with the SGD optimiser for 200 epochs, batch size 256, weight decay $0.05$, initial learning rate $0.0005$ and the cosine scheduling. As the results in Table~\ref{tab:transfer} indicate, FSAM consistently improves the performance over the competing methods. 

%%%% 
\begin{table}%[t!]
\vspace{-1.4em}
\caption{Test accuracies for transfer learning. %From the ImageNet pre-trained model, finetuning to the target datasets. 
}
\vspace{+0.3em}
\centering
%\vspace{-0.6em}
%\vskip 0.05in
%\begin{scriptsize}
\begin{footnotesize}
%\begin{small}
%\begin{sc}
\centering
\scalebox{0.95}{
\begin{tabular}{c|cccc}
\toprule
%\multicolumn{5}{c}{CIFAR-10} \\
%\midrule
 & SGD\Tstrut & SAM & ASAM & FSAM \\ \midrule %\hline
\scriptsize{CIFAR}\Tstrut & $87.97^{\pm 0.12}$ & $87.99^{\pm 0.09}$ & $87.97^{\pm 0.08}$ & $\pmb{88.39}^{\pm 0.13}$ \\
\scriptsize{Cars}\Tstrut & $92.85^{\pm 0.31}$ & $93.29^{\pm 0.01}$ & $93.28^{\pm 0.02}$ & $\pmb{93.42}^{\pm 0.01}$ \\ 
\scriptsize{Flowers}\Tstrut & $94.53^{\pm 0.20}$ & $95.05^{\pm 0.06}$ & $95.08^{\pm 0.10}$ & $\pmb{95.26}^{\pm 0.03}$ \\ 
\bottomrule
\end{tabular}
}
%\end{sc}
\end{footnotesize}
%\end{scriptsize}
%\end{small}
\label{tab:transfer}
\vspace{-0.8em}
\end{table}
%%%%

%\hl{Skip the paragraph? --} The current state-of-the-art for the CIFAR datasets is the PyramidNet-272 with ShakeDrop layers, combined with several data augmentation strategies including Auto-Augment and MixCut augmentation. In addition, certain multi-GPU/TPU gradient averaging heuristics called the {\em $m$-sharpness} trick, are shown to improve the generalisation performance for SAM and ASAM furthermore. Considering that recent vision transformers considerably outperform ResNet-based architectures on image classification problems, we see how the vision transformer models (ViTs) with the proposed FSAM would perform, whether it can beat the SOTA or not.

%%%%%%%%%%%%%%%%%%%%%%%%%%%%%%%%%%%%%%%%%%%%%%%%%%
\subsection{Robustness to Adversarial Parameter Perturbation}\label{sec:adversarial}

Another important benefit of the proposed approach is robustness to parameter perturbation. In the literature, the generalisation performance of the corrupted models is measured by injecting artificial noise to the learned parameters, which serves as a measure of vulnerability of neural networks~\cite{chen17,arora18,dai19,gu19,nagel19,rakin20,weng20}. 
Although it is popular to add Gaussian random noise to the parameters, recently the {\em adversarial} perturbation~\cite{adv_def} was proposed where they consider the worst-case scenario under parameter corruption, which amounts to perturbation along the gradient direction. More specifically, the perturbation process is: $\theta \to \theta + \alpha \frac{\nabla l(\theta)}{\|\nabla l(\theta)\|}$ where $\alpha>0$ is the perturbation strength that can be chosen. It turns out to be a more effective perturbation measure than the random (Gaussian noise) corruption. 

We apply this adversarial parameter perturbation process to ResNet-34 models trained by SGD, SAM ($\gamma=0.05$), and FSAM ($\gamma=0.1,\eta=1.0$) on CIFAR-10, where we vary the perturbation strength $\alpha$ from 0.1 to 5.0. The results are depicted in Fig.~\ref{fig:param_perturb}. While we see performance drop for all models as $\alpha$ increases, eventually reaching nonsensical models (pure random prediction accuracy $10\%$) after $\alpha\geq 5.0$, the proposed Fisher SAM exhibits the least performance degradation among the competing methods, proving the highest robustness to the adversarial parameter corruption. 

%%%%
\begin{figure}%[t!]
%\vspace{-1.5em}
\begin{center}
%
%\begin{subfigure}[b]{0.9\textwidth}
\centering
\includegraphics[trim = 5mm 4mm 5mm 5mm, clip, scale=0.335]{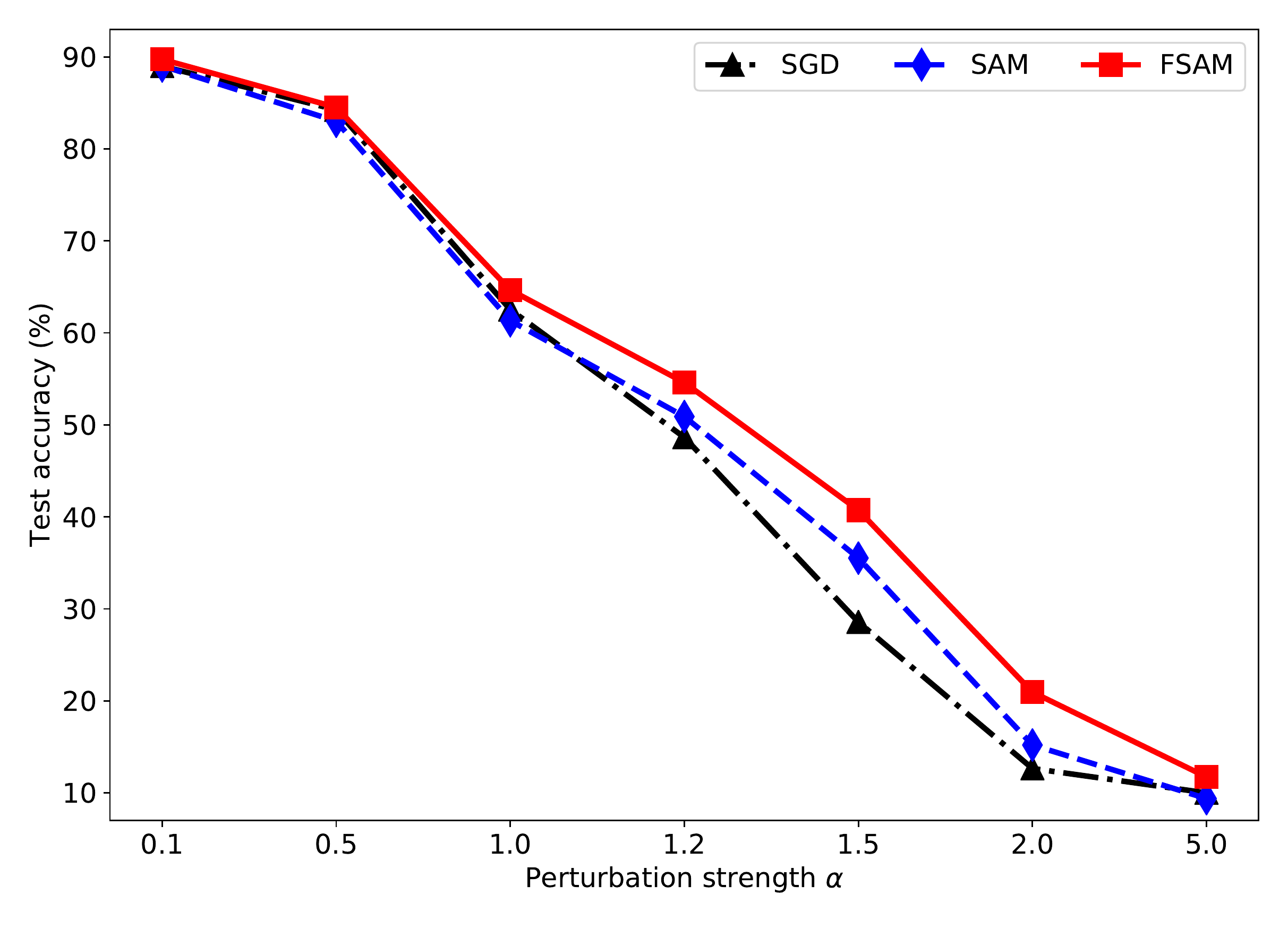}
\end{center}
\vspace{-1.8em}
\caption{Adversarial parameter perturbation.
}
\vspace{-0.8em}
\label{fig:param_perturb}
\end{figure}
%%%%

%(Experiment 1: Image datasets) Backbones: ResNet34 / Datasets: CIFAR, Tiny-ImageNet

%(Experiment 2: NLP datasets) Backbones: Transformer / Datasets: De-EN, PTB

%%%%%%%%%%%%%%%%%%%%%%%%%%%%%%%%%%%%%%%%%%%%%%%%%%
\subsection{Robustness to Label Noise }\label{sec:label_noise}

In the previous works, SAM and ASAM are shown to be robust to label noise in training data. Similarly as their experiments, we introduce symmetric label noise by random flipping with corruption levels 20, 40, 60, and $80\%$, as introduced in~\cite{rooyen15}. The results on ResNet-32 networks on the CIFAR-10 dataset are summarized in Table~\ref{tab:label_noise}. Our Fisher SAM shows high robustness to label noise comparable to SAM while exhibiting significantly large improvements over SGD and ASAM. 
%
%%%% 
\begin{table}%[b!]
\vspace{-0.1em}
\caption{Label noise. Test accuracies on CIFAR-10. %Noise rates applied are: 0.2, 0.4, 0.6, and 0.8. 
The hyperparameters are: (SAM) $\gamma=0.1$, (ASAM) $\gamma=0.5,\eta=0.01$, and (FSAM) $\gamma=0.1,\eta=0.001$.
}
\vspace{+0.3em}
\centering
%\vspace{-0.6em}
%\vskip 0.05in
%\begin{scriptsize}
\begin{footnotesize}
%\begin{small}
%\begin{sc}
\centering
\scalebox{0.95}{
\begin{tabular}{c|cccc}
\toprule
%\multicolumn{5}{c}{CIFAR-10} \\
%\midrule
Rates\Tstrut & SGD & SAM & ASAM & FSAM \\ \midrule %\hline
0.2 &$87.97^{\pm 0.04}$ &$\pmb{93.12}^{\pm 0.24}$ &$92.26^{\pm 0.33}$ & $93.03^{\pm 0.11}$\\ \hline
0.4\Tstrut &$83.60^{\pm 0.59}$ &$90.54^{\pm 0.19}$ &$88.47^{\pm 0.06}$ & $\pmb{90.95}^{\pm 0.17}$ \\ \hline
0.6\Tstrut &$76.97^{\pm 0.31}$ &$85.39^{\pm 0.52}$ &$82.32^{\pm 0.55}$ & $\pmb{85.76}^{\pm 0.21}$ \\ \hline
0.8\Tstrut &$66.32^{\pm 0.27}$ &$74.31^{\pm 1.02}$ &$70.56^{\pm 0.27}$ & $\pmb{74.66}^{\pm 0.67}$ \\
\bottomrule
\end{tabular}
}
%\end{sc}
\end{footnotesize}
%\end{scriptsize}
%\end{small}
\label{tab:label_noise}
\vspace{-1.0em}
\end{table}
%%%%

%%%%%%%%%%%%%%%%%%%%%%%%%%%%%%%%%%%%%%%%%%%%%%%%%%
\subsection{Hyperparameter Sensitivity}\label{sec:hparam}

In our Fisher SAM, there are two hyperparameters: $\gamma=$ the size of the neighborhood and $\eta=$ the anti-regulariser for the Fisher impact. We demonstrate the sensitivity of Fisher SAM to these hyperparameters. To this end, we train WRN-28-10 backbone models trained with the FSAM loss on the CIFAR-100 dataset for different hyperparameter combinations: $(\gamma,\eta) \in \{0.01,0.05,0.1,0.5,1.0\} \times \{10^{-4},10^{-3},10^{-2},10^{-1},1.0,10\}$. In Fig.~\ref{fig:hparams} we plot the test accuracy of the learned models\footnote{Note that there are discrepancies from Table~\ref{tab:cifar} that may arise from the lack of data augmentation.}. The results show that unless $\gamma$ is chosen too large (e.g., $\gamma=1.0$), the learned models all perform favorably well, being less sensitive to the hyperparameter choice. But the best performance is attained when $\gamma$ lies in between $0.1$ and $0.5$, with some moderate values for the Fisher impact $\eta$ in between $0.01$ and $1.0$. 

%%%%
\begin{figure}[t!]
%\vspace{-1.5em}
\begin{center}
%
%\begin{subfigure}[b]{0.9\textwidth}
\centering
\includegraphics[trim = 5mm 4mm 6mm 5mm, clip, scale=0.335]{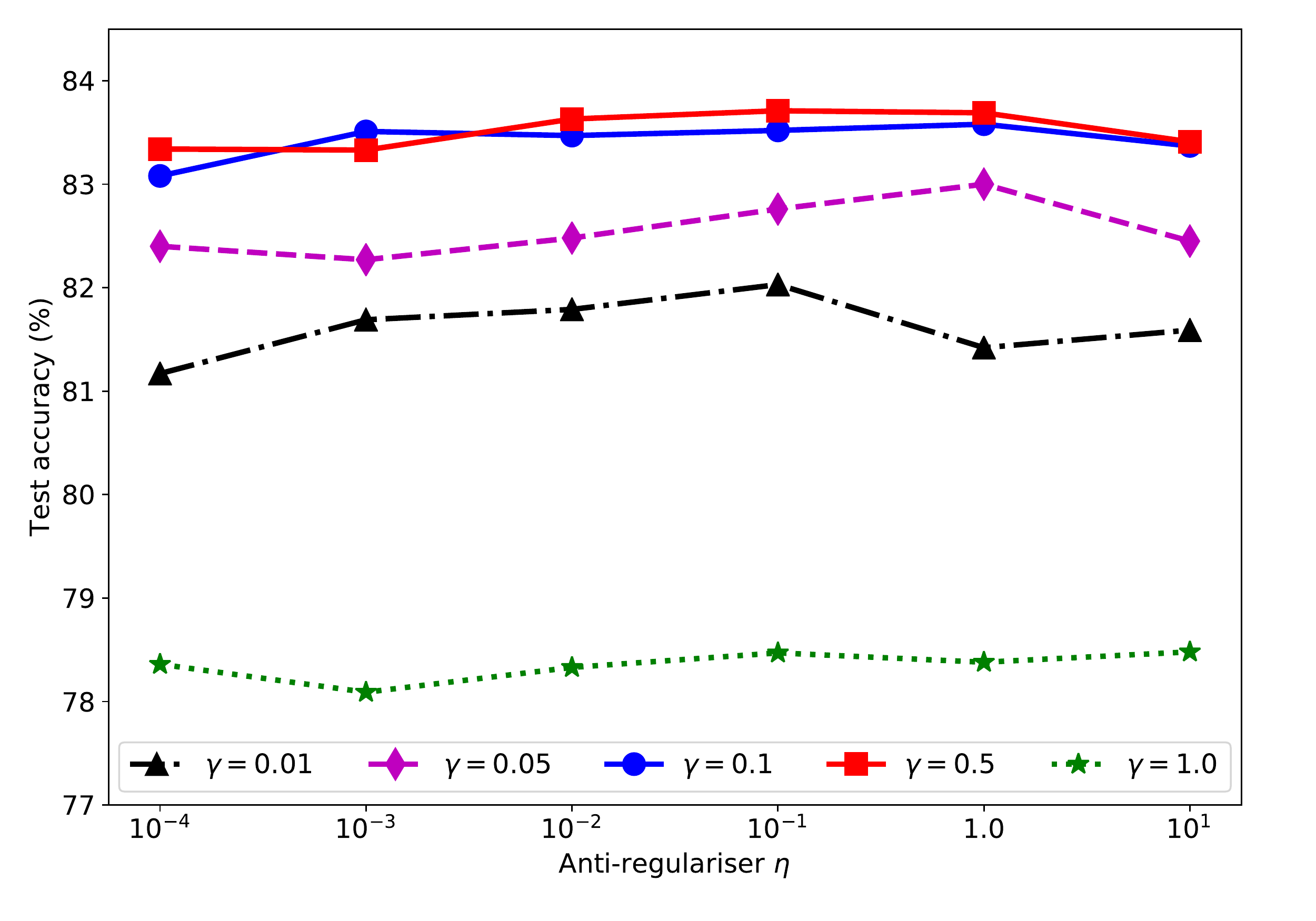}
\end{center}
\vspace{-1.8em}
\caption{Hyperparameter sensitivity of Fisher SAM. %: With various hyperparameter combinations $(\gamma,\eta)$, the test accuracies of Fisher SAM trained models are shown. 
}
\label{fig:hparams}
%\vspace{-2.0em}
\end{figure}
%%%%

%%%%%%%%%%%%%%%%%%%%%%%%%%%%%%%%%%%%%%%%%%%%%%%%%%
\subsection{Computational Overhead of FSAM}

Compared to SAM, our FSAM requires only extra cost of element-wise vector product under our diagonal gradient-magnitude approximation schemes. In practice, the difference is negligible: the per-batch (batch size 128) times for CIFAR10/WRN28-10 are: 0.2322 seconds (SAM), 0.2334 seconds (FSAM) on a single RTX 2080 Ti machine.

%%%%%%%%%%%%%%%%%%%%%%%%%%%%%%%%%%%%%%%%%%%%%%%%%%%%%%%%%%%%%%%%%%%%%%%%%%%%%%%%%%%%%%%%%%%%%%%%%%%%%%%%%%%%%%%%%%%%%%%%%%%%%%%%%%%%%%%%%%%%%%%%%%%%%%%%%%%%%%%%%%%%%%%%%%%%%%%%%%%%%%%%%%%%%%%%%%%%%%%%
\section{Conclusion}\label{sec:conclusion}

In this paper we have proposed a novel sharpness aware loss function that incorporates the information geometry of the underlying parameter manifold, which defines a more accurate intrinsic neighborhood structure, addressing the issues of the previous flat-minima optimisation methods. The proposed algorithm remains computationally efficient via a theoretically justified gradient magnitude approximation for the Fisher information matrix. By proving the theoretical generalisation bound, and through diverse experiments on image classification, extra-training/finetuning, data corruption, and model perturbation, we demonstrated improved generalisation performance and robustness of the proposed Fisher SAM. 
Several research questions remain: natural gradient updates combined with the Fisher SAM loss, investigation of distributed update averaging schemes ($m$-sharpness), and discovering the relation to the proximal gradients, among others, and we leave them as future work.

% In the unusual situation where you want a paper to appear in the
% references without citing it in the main text, use \nocite
%\nocite{langley00}

\bibliography{main}
\bibliographystyle{icml2022}

%%%%%%%%%%%%%%%%%%%%%%%%%%%%%%%%%%%%%%%%%%%%%%%%%%%%%%%%%%%%%%%%%%%%%%%%%%%%%%%
%%%%%%%%%%%%%%%%%%%%%%%%%%%%%%%%%%%%%%%%%%%%%%%%%%%%%%%%%%%%%%%%%%%%%%%%%%%%%%%
% APPENDIX
%%%%%%%%%%%%%%%%%%%%%%%%%%%%%%%%%%%%%%%%%%%%%%%%%%%%%%%%%%%%%%%%%%%%%%%%%%%%%%%
%%%%%%%%%%%%%%%%%%%%%%%%%%%%%%%%%%%%%%%%%%%%%%%%%%%%%%%%%%%%%%%%%%%%%%%%%%%%%%%
\newpage
\appendix
\onecolumn
% \section{You \emph{can} have an appendix here.}

% You can have as much text here as you want. The main body must be at most $8$ pages long.
% For the final version, one more page can be added.
% If you want, you can use an appendix like this one, even using the one-column format.

%%%%%%%%%%%%%%%%%%%%%%%%%%%%%%%%%%%%%%%%%%%%%%
\section{Theorem~\ref{thm:main} and  Proof}\label{appsec:main_proof}

We first state several regularity conditions and  assumptions under which the theorem can be proved formally. 
%%%%
\begin{assumption}\label{a1}
Let $k$ be the dimensionality of the model parameters $\theta$. We consider the following $J$ ellipsoids centered at some fixed points $\overline{\theta_j} \in \mathbb{R}^k$ with elliptic axes determined by the Fisher information $F(\overline{\theta_j})$ and sizes $r_j$: 
%that can cover the parameter space $\Theta$,
\begin{align}
R_j \triangleq \{\theta \in \mathbb{R}^k \ \vert \  (\theta-\overline{\theta_j})^\top F(\overline{\theta_j})(\theta-\overline{\theta_j}) \leq r_j^2 \}, \ \ \ \ j=1,\dots,J.
\label{appeq:ellipsoids}
\end{align}
We choose $\overline{\theta_j}$ properly such that $F(\overline{\theta_j})$ are strictly positive\footnote{The strict positive definiteness of the Fisher can be assured for non-redundant parametrisation, and can be mildly assumed.} (all eigenvalues greater than some constant $\lambda_{min}>0$).
Our model parameter space $\Theta$ is assumed to be contained in these ellipsoids, i.e., $\Theta \subseteq \cup_{j=1}^J R_j$. 
We also assume $\Theta$ has a bounded diameter $B$, that is, $B \geq \textrm{diam}(\Theta) = \max_{\theta,\theta'\in\Theta} \|\theta-\theta'\|$ (thus $\|\theta\|\leq B$). 
Note that since $\textrm{vol}(R_j) \propto r_j^k \cdot |F(\overline{\theta_j})|^{-1/2}$, we have $J = O(\max_j \textrm{diam}(\Theta)^k / r_j^k)$, and thus $\log J = O(k)$.
\end{assumption}
%%%%
The following two assumptions are regularity conditions regarding smoothness of the Fisher information matrix $F(\theta)$ as a function of $\theta$, that are assumed to hold for $\theta$ in each ellipsoid $R_j$.  Intuitively these conditions can be met by adjusting $r_j$ sufficiently small, but specific conditions are provided below. 
%%%%
\begin{assumption}\label{a2}
Let $\lambda_i(A)$ be the $i$-th largest eigenvalue of the (positive definite) matrix $A$. Then for $j=1,\dots,J$,
\begin{align}
\frac{\lambda_i(F(\theta))}{\lambda_i(F(\overline{\theta_j}))} = 1 + c_{ij}, \ \ c_{ij} \in[-\epsilon_{min},\epsilon_{min}], \ \ \forall\theta\in R_j, \ \forall i=1,\dots,k,
\label{appeq:a2}
\end{align}
where $\epsilon_{min}$ is a small positive constant. 
For instance, if $\lambda_i(\theta) \triangleq \lambda_i(F(\theta))$ is Lipschitz continuous with constant $C_1$, that is, 
\begin{align}
|\lambda_i(\theta)-\lambda_i(\overline{\theta_j})| \leq C_1 \|\theta-\overline{\theta_j}\|_{F(\overline{\theta_j})}, \ \ \forall\theta \in R_j
\end{align}
where $\|x\|_A = (x^\top A x)^{1/2}$, then (\ref{appeq:a2}) holds by adjusting $r_j$ properly. More specifically,
\begin{align}
\bigg|\frac{\lambda_i(\theta)}{\lambda_i(\overline{\theta_j})} - 1\bigg| = \lambda_i(\overline{\theta_j})^{-1} |\lambda_i(\theta)-\lambda_i(\overline{\theta_j})| \leq \lambda_i(\overline{\theta_j})^{-1} C_1 \|\theta-\overline{\theta_j}\|_{F(\overline{\theta_j})} \leq \lambda_i(\overline{\theta_j})^{-1} C_1 r_j.
\end{align}
Hence we can choose $r_j \leq \lambda_i(\overline{\theta_j}) \epsilon_{min}/C_1$ to make (\ref{appeq:a2}) hold.
\end{assumption}
%%%%
%%%%
\begin{assumption}\label{a3}
We assume that $F(\theta)$ is non-singular, and for $j=1,\dots,J$,
\begin{align}
F(\overline{\theta_j}) F(\theta)^{-1} = I + A^j, \ \ A^j_{i,i'}\in[-\epsilon_{min},\epsilon_{min}], \ \ \forall\theta\in R_j
\label{appeq:a3}
\end{align}
For instance, if $F(\theta)^{-1}$ is Lipschitz continuous with constant $C_2$, that is,
\begin{align}
\|F(\theta)^{-1}-F(\overline{\theta_j})^{-1}\| \leq C_2 \|\theta-\overline{\theta_j}\|_{F(\overline{\theta_j})}, \ \ \forall\theta \in R_j
\end{align}
where the matrix norm in the RHS is the max-norm, i.e., $\|B\| = \max_{i,i'} |B_{i,i'}|$, then we have
\begin{align}
\|F(\overline{\theta_j}) F(\theta)^{-1} - I\| \ &= \ \|F(\overline{\theta_j}) \big(  F(\theta)^{-1} - F(\overline{\theta_j})^{-1} \big)\| \ \leq \ \|F(\overline{\theta_j})\| \|F(\theta)^{-1}-F(\overline{\theta_j})^{-1}\| \\
&\leq \ \|F(\overline{\theta_j})\| C_2 \|\theta-\overline{\theta_j}\|_{F(\overline{\theta_j})} \ \leq \  \|F(\overline{\theta_j})\| C_2 r_j. 
\end{align}
Choosing $r_j \leq \|F(\overline{\theta_j})\|^{-1}  \epsilon_{min}/C_2$ is sufficient to satisfy (\ref{appeq:a3}).

\end{assumption}
%%%%

Now we re-state our main theorem. 
%%%%
\begin{theorem}[Generalisation bound of Fisher SAM] 
Let $\Theta\subseteq\mathbb{R}^k$ be the model parameter space as described in the above assumptions. 
%with a bounded diameter $B$. That is, $B \geq \textrm{diam}(\Theta) = \max_{\theta,\theta'} \|\theta-\theta'\|$ (thus $\|\theta\|\leq B$). 
For any $\theta\in\Theta$, with probability at least $1-\delta$ over the choice of the training set $S$ ($|S|=n$), %the following holds.
\begin{align}
\mathbb{E}_{\epsilon\sim\mathcal{N}(0,\rho^2 F(\theta)^{-1})}
[l_D(\theta+\epsilon)] \ \leq \  l^\gamma_{FSAM}(\theta; S) + \sqrt{\frac{O(k + \log \frac{n}{\delta})}{n-1}},
\label{appeq:fsam_bound}
\end{align}
where $l_D(\cdot)$ is the generalisation loss, $l^\gamma_{FSAM}(\cdot; S) =  \max_{\epsilon^\top F(\theta) \epsilon \leq \gamma^2} l_S(\theta+\epsilon)$ is the empirical Fisher SAM loss on $S$, %as in (\ref{eq:fsam_loss}), 
and $\rho = ( \sqrt{k} + \sqrt{\log n} )^{-1} \gamma$. 
\label{appthm:main}
\end{theorem}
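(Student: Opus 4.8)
The plan is to adapt the PAC-Bayesian ``prior covering'' argument of~\cite{sam} to a family of \emph{full-covariance} Gaussian priors aligned with the Fisher geometry. I would start from McAllester's PAC-Bayes bound: for any prior $P$ chosen independently of $S$, with probability at least $1-\delta$ over $S$, every posterior $Q$ satisfies $\mathbb{E}_{\theta'\sim Q}[l_D(\theta')] \le \mathbb{E}_{\theta'\sim Q}[l_S(\theta')] + \sqrt{(\mathrm{KL}(Q\|P)+\log(n/\delta))/(2(n-1))}$. The posterior of interest is $Q_\theta = \mathcal{N}(\theta,\rho^2 F(\theta)^{-1})$, and the difficulty — absent in~\cite{sam} — is that \emph{both} its mean and its covariance depend on the data-dependent point $\theta$. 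To handle this I would pre-define a finite family of priors $\{P_{j,\ell}\}$ with $P_{j,\ell}=\mathcal{N}(\overline{\theta_j},\sigma_\ell^2 F(\overline{\theta_j})^{-1})$, where $j$ ranges over the ellipsoidal cover $R_1,\dots,R_J$ of Assumption~\ref{a1} and $\sigma_\ell^2$ ranges over a geometric grid of $L$ scales covering the range of $\rho^2$ allowed by $\Theta$. Applying a union bound with the failure budget $\delta$ split over the $J\cdot L$ priors, and using $\log J = O(k)$ (Assumption~\ref{a1}) together with $L$ polynomial in $n$, the union-bound penalty contributes only $O(k+\log(n/\delta))$.

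Given $\theta\in\Theta$, I would select the ellipsoid $R_j\ni\theta$ and the grid scale $\sigma_\ell^2$ closest to $\rho^2$, and instantiate $Q=Q_\theta$. The core computation is the KL divergence between the two Gaussians, $\mathrm{KL}(Q\|P_{j,\ell}) = \tfrac12\big[\mathrm{tr}(\Sigma_1^{-1}\Sigma_0) - k + (\overline{\theta_j}-\theta)^\top\Sigma_1^{-1}(\overline{\theta_j}-\theta) + \log(|\Sigma_1|/|\Sigma_0|)\big]$ with $\Sigma_0=\rho^2F(\theta)^{-1}$ and $\Sigma_1=\sigma_\ell^2 F(\overline{\theta_j})^{-1}$. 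The trace term equals $(\rho^2/\sigma_\ell^2)\,\mathrm{tr}(F(\overline{\theta_j})F(\theta)^{-1})$, which by Assumption~\ref{a3} is $(\rho^2/\sigma_\ell^2)(k+O(k\epsilon_{min}))$; the log-determinant term equals $k\log(\sigma_\ell^2/\rho^2) + \sum_i\log(\lambda_i(F(\theta))/\lambda_i(F(\overline{\theta_j})))$, which by Assumption~\ref{a2} is $k\log(\sigma_\ell^2/\rho^2) + O(k\epsilon_{min})$; and the quadratic shift term is at most $r_j^2/\sigma_\ell^2$ because $\theta\in R_j$. Choosing the scale grid so that $\sigma_\ell^2$ is within a constant factor of $\rho^2$, and the cover so that $r_j$ is at most a constant multiple of $\rho$, all four pieces are $O(k)$, hence $\mathrm{KL}(Q\|P_{j,\ell})=O(k)$.

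It then remains to relate the empirical posterior loss to the Fisher SAM loss. Under $\epsilon\sim\mathcal{N}(0,\rho^2F(\theta)^{-1})$ the scalar $\epsilon^\top F(\theta)\epsilon/\rho^2$ is $\chi^2_k$-distributed, so by standard $\chi^2$ concentration and the choice $\rho = (\sqrt{k}+\sqrt{\log n})^{-1}\gamma$ (which makes $\gamma^2/\rho^2 = k + 2\sqrt{k\log n}+\log n$ exceed the relevant $\chi^2_k$ quantile), with probability at least $1-1/n$ over $\epsilon$ one has $\epsilon^\top F(\theta)\epsilon\le\gamma^2$, i.e.\ $\theta+\epsilon$ lies in the Fisher ellipsoid defining $l^\gamma_{FSAM}(\theta;S)$. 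Splitting $\mathbb{E}_{\epsilon}[l_S(\theta+\epsilon)]$ over this event and its complement and bounding the loss on the complement by its range gives $\mathbb{E}_{\theta'\sim Q}[l_S(\theta')]\le l^\gamma_{FSAM}(\theta;S)+O(1/n)$. Combining this with the PAC-Bayes inequality and the $O(k)$ KL bound, and absorbing the $O(1/n)$ remainder and the union-bound penalty into the square root, yields~(\ref{appeq:fsam_bound}).

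The main obstacle is the second step: controlling $\mathrm{KL}(Q_\theta\|P_{j,\ell})$ \emph{uniformly} over $\theta\in R_j$ when both the mean and the full covariance of $Q_\theta$ vary with $\theta$. This is where the ellipsoidal cover and the smoothness Assumptions~\ref{a1}--\ref{a3} must interlock: $r_j$ must be small enough that the eigenvalue ratios (Assumption~\ref{a2}) and the matrix product $F(\overline{\theta_j})F(\theta)^{-1}$ (Assumption~\ref{a3}) stay within $\epsilon_{min}$ of the identity, yet $J$ must not exceed $e^{O(k)}$ so that the cover costs only $O(k)$. The assumptions already show $r_j$ can be taken as a constant times $\epsilon_{min}$ over the relevant Lipschitz constants, so this is feasible; the delicate part is keeping the constants consistent across the trace, determinant, and shift terms (and across the additional $\chi^2$ tail argument) so that everything collapses to $O(k+\log(n/\delta))$.
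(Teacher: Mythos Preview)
Your proposal is correct and follows essentially the same prior-covering PAC-Bayes argument as the paper, with one superfluous ingredient: the scale grid $\{\sigma_\ell\}$. Since $\rho = (\sqrt{k}+\sqrt{\log n})^{-1}\gamma$ depends only on $k$, $n$, and the hyperparameter $\gamma$, it is data-independent and can be baked directly into the priors; the paper simply takes $P_j = \mathcal{N}(\overline{\theta_j}, \rho^2 F(\overline{\theta_j})^{-1})$ with the \emph{same} $\rho$ as the posterior, so only the cover over centres $\{\overline{\theta_j}\}$ is needed and the union bound runs over $J$ rather than $J\cdot L$. (The scale grid you introduce is a vestige of the original SAM proof, where the posterior variance depends on $\|\theta\|$; here it does not.)

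Two further minor discrepancies, neither fatal: (i) the paper does not require $r_j\lesssim\rho$ but instead lets $r^2/\rho^2 = r^2(\sqrt{k}+\sqrt{\log n})^2/\gamma^2 = O(k+\log n)$ directly, which keeps the cover $n$-independent; (ii) at the threshold $(\sqrt{k}+\sqrt{\log n})^2$ the Laurent--Massart inequality gives tail probability $1/\sqrt{n}$ rather than your $1/n$, so the residual term is $l_{\max}/\sqrt{n}$, but this is absorbed into the square root either way. The KL computation via Assumptions~\ref{a2}--\ref{a3}, the $\chi^2$ whitening argument, and the event-splitting of the empirical expectation all match the paper's proof.
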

%%%%
%%%%
\begin{proof}
Motivated from~\cite{sam}, we use the PAC-Bayes theorem~\cite{mcallester99} to derive the bound. According to the PAC-Bayes generalisation bound of~\cite{mcallester99,dr17}, for any prior distribution $P(\theta)$, with probability at least $1-\delta$ over the choice of the training set $S$, it holds that 
\begin{align}
\mathbb{E}_{Q(\theta)}[l_D(\theta)] \leq \mathbb{E}_{Q(\theta)}[l_S(\theta)] + \sqrt{\frac{\textrm{KL}(Q(\theta)||P(\theta)) + \log\frac{n}{\delta}}{2(n-1)}}
\label{appeq:pac_bayes}
\end{align}
for any posterior distribution $Q(\theta)$ that may be dependent on the training data $S$. In (\ref{appeq:pac_bayes}), $l_D(\cdot)$ and $l_S(\cdot)$ are generalisation and empirical losses, respectively. 
We choose $Q(\theta) = \mathcal{N}(\theta_0, \rho^2 F(\theta_0)^{-1})$, a Gaussian centered at $\theta_0$ with the covariance aligned with the Fisher information metric at $\theta_0$. One can choose $\theta_0$ arbitrarily from $\Theta$, and it can be dependent on $S$ (in which sense, a more accurate notation would be $\theta_{|S}$, however, we use $\theta_0$ for simplicity). To minimise the bound of (\ref{appeq:pac_bayes}), we aim to choose the prior $P(\theta)$ that minimises the KL divergence term, which coincides with $Q(\theta)$. However, this would violate the PAC-Bayes assumption where the prior should be independent of $S$. The idea, inspired by the covering approach~\cite{langford02,chatterji20,sam}, is to have a pre-defined set of (data independent) prior distributions for all of which the PAC-Bayes bounds hold, and we select the one that is closest to $Q(\theta)$ from the pre-defined set. 

% To this end, we consider the following $J$ ellipsoids that can cover the parameter space $\Theta$,
% \begin{align}
% R_j \triangleq \{\theta \ \vert \  (\theta-\overline{\theta_j})^\top F(\overline{\theta_j})(\theta-\overline{\theta_j}) \leq r_j^2 \}, \ \ \ \ j=1,\dots,J.
% \label{appeq:ellipsoids}
% \end{align}
% We can rather restrict our $\Theta$ to be contained in the ellipsoids, i.e., $\Theta \subseteq \cup_{j=1}^J R_j$. It is not difficult to see that $\log J = O(k)$ from the volume computation (TODO). We choose the sizes of the ellipsoids $r_j$ small enough that the regularity conditions are satisfied (TODO).

Specifically, we define $J$ prior distributions $\{P_j(\theta)\}_{j=1}^J$ as $P_j(\theta) = \mathcal{N}(\overline{\theta_j}, \rho^2 F(\overline{\theta_j})^{-1})$ sharing the  centers and covariances with the ellipsoids defined as above. 
Then applying the PAC-Bayes bound (\ref{appeq:pac_bayes}) for each $j$ makes the following hold for $P_j(\theta)$ with probability at least $1-\delta_j$ over the choice of the training set $S$,
\begin{align}
\forall Q(\theta), \ \ \mathbb{E}_{Q(\theta)}[l_D(\theta)] \leq \mathbb{E}_{Q(\theta)}[l_S(\theta)] + \sqrt{\frac{\textrm{KL}(Q(\theta)||P_j(\theta)) + \log\frac{n}{\delta_j}}{2(n-1)}}. % \ \ \ \ \ \ \ \ \textrm{(denoted by $Bound_j$)}.
\label{appeq:pac_bayes_each_j}
\end{align}
By having the intersection of the training sets for which (\ref{appeq:pac_bayes_each_j}) holds, we can say that (\ref{appeq:pac_bayes_each_j}) holds for {\em all} $P_j(\theta)$ ($\forall j=1,\dots,J$) over the intersection. By the union bound theorem, the probability over the choice of the intersection is at least $1-\sum_{j=1}^J \delta_j$. By letting $\delta_j = \frac{\delta}{J}$, we thus have the following bound (statement): For all $P_j(\theta)$ ($\forall j=1,\dots,J$), with probability at least $1-\delta$ over the choice of the training set $S$, the folowing holds: %$Bound_j$ (\ref{appeq:pac_bayes_each_j}) holds for $\forall j=1,\dots,J$.
\begin{align}
\forall Q(\theta), \ \ \mathbb{E}_{Q(\theta)}[l_D(\theta)] \leq \mathbb{E}_{Q(\theta)}[l_S(\theta)] + \sqrt{\frac{\textrm{KL}(Q(\theta)||P_j(\theta)) + \log\frac{n}{\delta} + \log J}{2(n-1)}}, \ \ \ \ \forall j=1,\dots,J.
\label{appeq:pac_bayes_combined}
\end{align}

Now, we choose the prior $P_j(\theta)$ from the prior set that is as close to the posterior  $Q(\theta)$ as possible (in KL divergence).  %= \mathcal{N}(\theta_0, \rho^2 F(\theta_0)^{-1})$, 
%i.e., $\min_{1\leq j \leq J} \textrm{KL}(Q||P_j)$. 
Since
\begin{align}
\textrm{KL}(Q||P_j) = \frac{1}{2} \bigg( 
\textrm{Tr}\big(F(\overline{\theta_j})F(\theta_0)^{-1}\big) + \frac{1}{\rho^2}  (\theta_0-\overline{\theta_j})^\top F(\overline{\theta_j})(\theta_0-\overline{\theta_j}) + \log \frac{\vert F(\theta_0)\vert}{\vert F(\overline{\theta_j}) \vert} - k \bigg),
\label{appeq:kl}
\end{align}
if we choose $j^*$ such that $\theta_0 \in R_{j^*}$, using $\log \frac{\vert F(\theta_0)\vert}{\vert F(\overline{\theta_j})\vert} = \sum_i \log \frac{\lambda_i(F(\theta_0))}{\lambda_i(F(\overline{\theta_j}))} \leq \sum_i \log (1+c_{ij}) \leq \sum_i c_{ij} \leq k \epsilon_{min}$ (from Assumption~\ref{a2}) and $\textrm{Tr}(F(\overline{\theta_j})F(\theta_0)^{-1}) = \textrm{Tr}(I+A^j) \leq k + k \epsilon_{min}$ (from Assumption~\ref{a3}), 
we have the following:
\begin{align}
\textrm{KL}(Q||P_{j^*}) \leq \frac{1}{2} \bigg( k + k \epsilon_{min} + \frac{r_{j^*}^2}{\rho^2} + k \epsilon_{min} - k \bigg) 
= \frac{r_{j^*}^2}{2\rho^2} + k \epsilon_{min} 
\leq \frac{r^2}{2\rho^2} + k \epsilon_{min},
\label{appeq:kl_bound}
\end{align}
where $r \triangleq \max_{1\leq j \leq J} r_j$. 
From (\ref{appeq:pac_bayes_combined}) which holds for $\forall j=1,\dots,J$, we take only the inequality corresponding to $j=j^*$. By slightly rephrasing $\mathbb{E}_{Q(\theta)}
[f(\theta)]$ as  $\mathbb{E}_{\epsilon\sim\mathcal{N}(0,\rho^2 F(\theta_0)^{-1})}
[f(\theta_0+\epsilon)]$ and replacing $\theta_0$ with $\theta$, 
%the expectation over $Q(\theta) = \mathcal{N}(\theta_0, \rho^2 F(\theta_0)^{-1})$, 
we have the following bound that holds with probability at least $1-\delta$:
\begin{align}
\forall \theta \in \Theta, \ \ \mathbb{E}_{\epsilon\sim\mathcal{N}(0,\rho^2 F(\theta)^{-1})}
[l_D(\theta+\epsilon)] \ \leq \  \mathbb{E}_{\epsilon\sim\mathcal{N}(0,\rho^2 F(\theta)^{-1})}[l_S(\theta+\epsilon)] + \sqrt{\frac{\frac{r^2}{2\rho^2} + k \epsilon_{min} + \log\frac{n}{\delta} + \log J}{2(n-1)}}.
\label{appeq:pac_bayes_exp_ver}
\end{align}

The next step is to bound the expectation in RHS of (\ref{appeq:pac_bayes_exp_ver}) by the worst-case loss, similarly as~\cite{sam}. We make use of the following result from~\cite{laurent_massart}:
\begin{align}
z \sim \mathcal{N}(0,\rho^2 I) \ \  \Longrightarrow \ \ \|z\|^2 \leq k \rho^2 \Bigg( 1 + \sqrt{\frac{\log n}{k}} \Bigg)^2 \ \ \textrm{with probability at least} \ 1 - \frac{1}{\sqrt{n}}. 
\label{appeq:laurent_massart}
\end{align}
Since we have non-spherical Gaussian $\epsilon$, we cannot directly apply (\ref{appeq:laurent_massart}), but need some transformation to whiten the correlation among the variables in $\epsilon$. By letting $u=F(\theta)^{1/2} \epsilon$, we have $u \sim \mathcal{N}(0, \rho^2 I)$. Then applying (\ref{appeq:laurent_massart}) leads to $\|u\|^2 = \epsilon^\top F(\theta) \epsilon \leq k \rho^2 ( 1 + \sqrt{(\log n) / k} )^2$ with probability at least $1-1/\sqrt{n}$. We denote the rightmost term by $\gamma^2$, that is, $\gamma = \rho \cdot ( \sqrt{k} + \sqrt{\log n} )$. We then upper-bound $\mathbb{E}_{\epsilon\sim\mathcal{N}(0,\rho^2 F(\theta)^{-1})}[l_S(\theta+\epsilon)]$ by partitioning the $\epsilon$ space into those with $\epsilon^\top F(\theta) \epsilon \leq \gamma^2$ and the rest $\epsilon^\top F(\theta) \epsilon > \gamma^2$. By taking the maximum loss for the former while choosing the loss bound (constant) $l_{max}$ for the latter, we have:
\begin{align}
\mathbb{E}_{\epsilon\sim\mathcal{N}(0,\rho^2 F(\theta)^{-1})}[l_S(\theta+\epsilon)] \ \leq \ (1 - 1/\sqrt{n}) \max_{\epsilon^\top F(\theta) \epsilon \leq \gamma^2} l_S(\theta+\epsilon) + \frac{l_{max}}{\sqrt{n}} \ \leq \ \max_{\epsilon^\top F(\theta) \epsilon \leq \gamma^2} l_S(\theta+\epsilon) + \frac{l_{max}}{\sqrt{n}}.
\label{appeq:exp_loss_bound}
\end{align}
Plugging (\ref{appeq:exp_loss_bound}) and $\gamma$ into (\ref{appeq:pac_bayes_exp_ver}) yields: With probability at least $1-\delta$, $\forall \theta \in \Theta$, the following holds
\begin{align}
%\forall \theta \in \Theta, \ \ 
\mathbb{E}_{\epsilon\sim\mathcal{N}(0,\rho^2 F(\theta)^{-1})}
[l_D(\theta+\epsilon)] \ &\leq \  \max_{\epsilon^\top F(\theta) \epsilon \leq \gamma^2} l_S(\theta+\epsilon) + \frac{l_{max}}{\sqrt{n}} + \sqrt{\frac{\frac{r^2 ( \sqrt{k} + \sqrt{\log n} )^2}{2\gamma^2} 
+ k \epsilon_{min} + \log\frac{n}{\delta} + \log J}{2(n-1)}} \label{appeq:pac_bayes_max_ver_1} \\
& = \max_{\epsilon^\top F(\theta) \epsilon \leq \gamma^2} l_S(\theta+\epsilon) + \sqrt{\frac{O(k + \log \frac{n}{\delta})}{n-1}}.
\label{appeq:pac_bayes_max_ver_2}
\end{align}
\end{proof}
%%%

%%%%%%%%%%%%%%%%%%%%%%%%%%%%%%%%%%%%%%%%%%%%%%
\section{Approximate Equality of KL Divergence and Fisher Quadratic}\label{appsec:kl_fisher}

In this section we prove that  $d(\theta+\epsilon,\theta) \approx \epsilon^\top F(\theta) \epsilon$ when $\epsilon$ is small, where
%%%%
\begin{align}
d(\theta',\theta) = \mathbb{E}_x\big[\textrm{KL}(p(y|x,\theta')||p(y|x,\theta))\big] \ \ \ \ \textrm{and} \ \ \ \ F(\theta) = \mathbb{E}_x\mathbb{E}_\theta %{p(y|x,\theta)} 
\big[
\nabla \log p(y|x,\theta) \nabla \log p(y|x,\theta)^\top \big]. 
\label{appeq:kld_and_fisher_info}
\end{align}
%%%%
Note that $\mathbb{E}_\theta[\cdot]$ indicates expectation over $p(y|x,\theta)$. First, we let $d(\theta',\theta; x) = \textrm{KL}(p(y|x,\theta')||p(y|x,\theta))$ and $F(\theta;x) = \mathbb{E}_\theta 
\big[\nabla \log p(y|x,\theta) \nabla \log p(y|x,\theta)^\top \big]$. From the definition of the KL divergence, 
%%%%
\begin{align}
d(\theta+\epsilon,\theta; x) %=  \textrm{KL}(p(y|x,\theta+\epsilon)||p(y|x,\theta)) 
= \sum_y p(y|x,\theta+\epsilon) \cdot \log \frac{p(y|x,\theta+\epsilon)}{p(y|x,\theta)}.
\label{appeq:kl_def}
\end{align}
%%%%
Regarding $p(y|x,\theta+\epsilon)$ and $\log p(y|x,\theta+\epsilon)$ as functions of $\theta$, we apply the first-order Taylor expansion at $\theta$ to each function as follows:
%%%%
\begin{align}
p(y|x,\theta+\epsilon) \approx p(y|x,\theta) + \nabla_\theta p(y|x,\theta)^\top \epsilon \ \ \ \ \textrm{and} \ \ \ \ \log p(y|x,\theta+\epsilon) \approx \log p(y|x,\theta) + \nabla_\theta \log p(y|x,\theta)^\top \epsilon.
\label{appeq:taylor}
\end{align}
%%%%
Plugging these approximates to (\ref{appeq:kl_def}) leads to:
%%%%
\begin{align}
d(\theta+\epsilon,\theta; x) \ &\approx \ \sum_y \Big( p(y|x,\theta) + \epsilon^\top \nabla_\theta p(y|x,\theta) \Big) \cdot \nabla_\theta \log p(y|x,\theta)^\top \epsilon \label{appeq:kl_deriv_1} \\
&= \ \mathbb{E}_\theta\big[\nabla_\theta \log p(y|x,\theta)\big]^\top \epsilon \ + \ \epsilon^\top \mathbb{E}_\theta\big[\nabla_\theta \log p(y|x,\theta) \nabla_\theta \log p(y|x,\theta)^\top\big] \epsilon, \label{appeq:kl_deriv_2}
\end{align}
%%%%
where in (\ref{appeq:kl_deriv_2}) we use $\nabla_\theta p(y|x,\theta) = p(y|x,\theta) \nabla_\theta \log p(y|x,\theta)$. 
The first term of (\ref{appeq:kl_deriv_2}) equals $0$ since
%%%%
\begin{align}
\mathbb{E}_\theta\big[\nabla_\theta \log p(y|x,\theta)\big] = \sum_y p(y|x,\theta) \cdot \frac{\nabla_\theta p(y|x,\theta)}{p(y|x,\theta)} = \sum_y \nabla_\theta p(y|x,\theta) = \frac{\partial}{\partial\theta} \sum_y p(y|x,\theta) = 0.
\end{align}
%%%%
Lastly, taking expectation over $x$ in (\ref{appeq:kl_deriv_2}) completes the proof.

%%%%%%%%%%%%%%%%%%%%%%%%%%%%%%%%%%%%%%%%%%%%%%%%%%%%%%%%%%%%%%%%%%%%%%%%%%%%%%%
%%%%%%%%%%%%%%%%%%%%%%%%%%%%%%%%%%%%%%%%%%%%%%%%%%%%%%%%%%%%%%%%%%%%%%%%%%%%%%%

\end{document}